\def\eqref#1{equation~\ref{#1}}
\def\1{\bm{1}}
\def\va{{\bm{a}}}
\def\vh{{\bm{h}}}
\def\mA{{\bm{A}}}
\def\mH{{\bm{H}}}
\def\mW{{\bm{W}}}
\DeclareMathAlphabet{\mathsfit}{\encodingdefault}{\sfdefault}{m}{sl}
\SetMathAlphabet{\mathsfit}{bold}{\encodingdefault}{\sfdefault}{bx}{n}
\newcommand{\R}{\mathbb{R}}
\newtheorem{definition}{Definition}
\newcommand{\mname}{\text{FastGAT}\xspace }
\newcommand{\Lrw}{\bm{L_{\mathrm{rw}}}}
\newcommand{\Lrws}{\bm{L_{\mathrm{rw,s}}}}
\newcommand{\Lsym}{\bm{L_{\mathrm{norm}}}}
\newcommand{\Lsyms}{\bm{L_{\mathrm{norm,s}}}}
\newcommand{\bmx}{\bm{x}}
\newcommand{\bmD}{\bm{D}}
\newcommand{\bmDs}{\bm{D_s}}
\newcommand{\bmA}{\bm{A}}
\newcommand{\newconv}{\widehat{\star}}
\newcommand{\rpm}{\raisebox{.2ex}{$\scriptstyle\pm$}}
 \DeclareMathOperator{\es}{EdgeSample}
\title{Fast Graph Attention Networks Using Effective Resistance Based Graph Sparsification}
\author{
Rakshith S Srinivasa \\
  Georgia Institute of Technology\\
  Atlanta, GA  \\
\And Cao Xiao\\
  IQVIA \\
  Cambridge, MA \And Lucas  Glass \\IQVIA \\Cambridge, MA  \And   Justin Romberg\\
  Georgia Institute of Technology\\
  Atlanta, GA  \\
\\
 \And Jimeng Sun \\
 University of Illinois, Urbana-Champagne\\
 Urbana, IL 
}
\newcommand{\norm}[1]{\left \|#1 \right \|}
 \newtheorem{lemma}{Lemma}
 \newtheorem{theorem}{Theorem}
 \newtheorem{proposition}{Proposition}
 \newenvironment{proof}{\noindent {\bf Proof} }{\endprf\par}
 \def \endprf{\hfill {\vrule height6pt width6pt depth0pt}\medskip}
\newcommand{\wa}{\widetilde{A}}
\newcommand\blfootnote[1]{%
  \begingroup
  \renewcommand\thefootnote{}\footnote{#1}%
  \addtocounter{footnote}{-1}%
  \endgroup
}
\begin{document}

\maketitle

\begin{abstract}

    The attention mechanism has demonstrated superior performance for inference over nodes in graph neural networks (GNNs), however, they result in a high computational burden during both training and inference. 
  We propose \mname, a method to make attention based GNNs lightweight by using spectral sparsification to generate an optimal pruning of the input graph. This results in a per-epoch time that is almost linear in the number of graph nodes as opposed to quadratic. We theoretically prove that spectral sparsification preserves the features computed by the GAT model, thereby justifying our \mname algorithm. We experimentally evaluate \mname on several large real world graph datasets for node classification tasks under both inductive and transductive settings. \mname can dramatically reduce (up to \textbf{10x}) the computational time and memory requirements, allowing the usage of attention based GNNs on large  graphs.  
\end{abstract}

\section{Introduction}

Graphs are efficient representations of pairwise relations, with many real-world applications including product co-purchasing network ~(\citep{10.1145/2783258.2783381}), co-author network~(\citep{DBLP:journals/corr/HamiltonYL17}), etc.
Graph neural networks (GNN) have become popular as a tool for inference from graph based data. By leveraging the geometric structure of the graph, GNNs learn improved representations of the graph nodes and edges that can lead to better performance in various inference tasks ~(\citep{kipf2016semi,hamilton2017inductive,velickovic2018graph}). More recently, the attention mechanism has demonstrated superior performance for inference over nodes in GNNs~(\citep{velickovic2018graph, xinyi2018capsule, thekumparampil2018attention, lee2020heterogeneous, bianchi2019hierarchical, knyazev2019understanding}). However,  attention based GNNs suffer from huge computational cost. This may hinder the applicability of the attention mechanism to large  graphs. 

GNNs generally rely on graph convolution operations. For a graph $G$ with $N$ nodes, graph convolution with a kernel $\bm{g_w}: \R \rightarrow \R$ is defined as
\begin{equation}
   \bm{g_w} \star \bm{h} = \bm{U}\bm{g_w(\Lambda)}\bm{U}^\top \bm{h}
    \label{eq:gconv} 
\end{equation} 
where $\bm{U}$ is the matrix of eigenvectors and $\bm{\Lambda}$ is the diagonal matrix of the eigenvalues of the normalized graph Laplacian matrix defined as 
\begin{equation}\Lsym =\bm{I} - \bm{D^{-1/2} A D^{-1/2}},
\label{eq:Lsymdef}
\end{equation}

with $\bm{D}$ and $\bm{A}$ being the degree matrix and the adjacency matrix of the graph, and $\bm{g_w}$ is applied elementwise.
Since computing $\bm{U}$ and $\bm{\Lambda}$ can be very expensive ($O(N^3)$), most GNNs use an approximation of the graph convolution operator. For example, in graph convolution networks (GCN) \citep{kipf2016semi}, node features are updated by computing averages as a first order approximation of Eq.\eqref{eq:gconv} over the neighbors of the nodes. A single neural network layer is defined as:
\begin{equation}
    \bm{H}^{(l+1)}_{\mathrm{GCN}} = \sigma \left ( \widetilde{\bmD}^{-1/2} \widetilde{\bmA} \widetilde{\bmD}^{-1/2} \bm{H^{(l)}} \bm{W^{(l)}} \right ),
    \label{eq:GCN}
\end{equation} where $\bm{H^{(l)}}$ and $\bm{W^{(l)}}$ are the activations and the weight matrix at the $l$th layer respectively and $\widetilde{\bmA} = \bmA + \bm{I}$ and $\widetilde{\bmD}$ is the degree matrix of $\widetilde{\bmA}$. 

% \js{I felt the following paragraph nonessential. I suggest we remove the entire paragraph to keep intro short.}
% In attention based GNNs, the attention mechanism is used as a substitute for the statically normalized convolution operation. Attention coefficients are learned for each neighbor of a node and used as the averaging weights for feature aggregation. This implicitly helps  generating better node embedding but introduces high computational costs since attentions need to be computed for each pair of connected nodes across all layers. Take the GAT ~\cite{velickovic2018graph} formulation of GNNs as an example, the complexity of computing attention coefficients is $O(K L |E|)$ for every forward pass, where $L$ is the number of layers, $|E|$ is the number of edges and $K$ is the number of attention coefficients computed per edge.
% Since dense graphs with $N$ nodes can have  $O(N^2)$ edges,  working with large real world graph datasets (e.g., the Reddit dataset with $232,965$ nodes and $\sim 57$ million edges) is challenging, not to mention these graphs are growing and will becoming even more massive~\cite{comment}.\js{Is the $O(N^2)$ only applied to dense graphs or all graphs as dense matrix operations are used even on sparse graphs?}

Attention based GNNs add another layer of complexity: they compute pairwise attention coefficients between all connected nodes. This process can significantly increase the computational burden, especially on large graphs. Approaches to speed up GNNs were proposed in \citep{chen2018fastgcn,hamilton2017inductive}. However, these sampling 
and aggregation based methods were designed for simple GCNs and are not applicable to attention based GNNs. There has also been works in inducing sparsity in attention based GNNs \citep{ye2019sparse, zheng2020robust}, but they focus on addressing potential overfitting of attention based models rather than scalability.

In this paper, we propose {\it Fast} {\it G}raph {\it A}ttention ne{\it T}work (\mname), an edge-sampling based method that leverages effective resistances of edges to make attention based GNNs lightweight.  The effective resistance measures importance of the edges  in terms of preserving the graph connectivity. \mname uses this measure to  prune the input graph and generate a randomized subgraph with far fewer edges. Such a procedure preserves the spectral features of a graph, hence retaining the information that the attention based GNNs  need. At the same time, the graph is amenable to more complex but computationally intensive models such as attention GNNs. With the sampled subgraph as their inputs, the attention based GNNs  enjoy  much smaller computational complexity. Note that \mname is applicable to all attention based GNNs. In this paper, we mostly focus on the Graph Attention NeTwork model (GAT) proposed by \citep{velickovic2018graph}. However we also show \mname is generalizable  to two other attention based GNNs, namely the cosine similarity based approach \citep{thekumparampil2018attention} and Gated Attention Networks \citep{zhang2018gaan}.

We note that Graph Attention Networks can be re-interpreted as convolution based GNNs. We show this explicitly in the Appendix. Based on this re-interpretation, we theoretically prove that spectral sparsification preserves the feature representations computed by the GAT model. We believe this interpretation also opens up interesting connections between sparsifying state transition matrices of random walks and speeding up computations in GNNs. 
% A second, more fundamental open question about attention GNN models is their theoretical analysis. While they outperform the previous models such as GCN, the latter has a more intuitive formulation. They are shown in \cite{kipf2016semi} to be first order approximations of the graph convolution operator. 
% There is no such understanding of the GAT model and this poses a challenge in providing theoretical analysis. 

The contributions of our paper are as outlined below:
\begin{itemize}[leftmargin=*]
    \item We propose \mname, a method that uses effective resistance based spectral graph sparsification to accelerate attention GNNs in both inductive and transductive learning tasks. The rapid subsampling and the spectrum preserving property of \mname help attention GNNs retain their accuracy advantages and become computationally light.
    
    \item  We provide a theoretical justification for using spectral sparsification in the context of attention based GNNs by  proving that spectral sparsification preserves the features  computed by GNNs.
    
      \item  \mname outperforms state-of-the-art algorithms across a variety of datasets under both transductive and inductive settings in terms of computation, achieving a speedup of up to \textbf{10x} in training and inference time. 
      On larger datasets such as Reddit, the standard GAT model runs out of memory, whereas \mname achieves an F1 score 0.93 with  7.73 second per epoch time in training.
    
      \item Further, \mname is generalizable to other attention based GNNs such as the cosine similarity based attention \citep{thekumparampil2018attention} and the Gated Attention Network \citep{zhang2018gaan}.
\end{itemize}

\section{Related work}
% In this section, we describe research similar and relevant to our paper in two parts.

\noindent{\bf Accelerating graph based inference}
has drawn increasing interest. Two methods proposed in \citep{chen2018fastgcn} (FastGCN) and \citep{huang2018adaptive} speed up GCNs by using importance sampling to sample a subset of nodes per layer during training. 
Similarly, GraphSAGE~ \citep{hamilton2017inductive} also proposes an edge sampling 
and aggregation based method for inductive learning based tasks. All of the above works use simple aggregation and target simple GCNs, while our work focus on more recent attention based GNNs such as \citep{velickovic2018graph}. We are able to take advantage of the attention mechanism, while still being computationally efficient. 

\noindent{\bf Graph sparsification} aims to approximate a given graph by a graph with fewer edges for efficient computation.  Depending on  final goals, there are cut-sparsifiers ~(\citep{cut-sparse}), pair-wise distance preserving sparsifiers~(\citep{distance-sparse}) and spectral sparsifiers ~(\citep{spielman2004nearly, spielman11graph}) , among others~(\citep{zhao2015gsparsify, calandriello2018improved, hubler2008metropolis, eden2018provable, Lapsparse}).  In this work, we use spectral sparsification to choose a randomized subgraph while preserving spectral properties. Apart form providing the strongest guarantees in preserving graph structure (\citep{chu2018graph}), they align well with GNNs  due to their connection to spectral graph convolutions.

\noindent{\bf Graph sparsification on neural networks}
have been studied recently (\citep{ye2019sparse, zheng2020robust, Ioannidis2020Pruned,louizos2017learning}). However, their main goal is to alleviate overfitting in GNNs not reducing the training time. They still require learning attention coefficients and binary gate values for all edges in the graph, hence not leading to any computational or memory benefit. In contrast, \mname uses a fast subsampling procedure, thus resulting in a drastic improvement in training and inference time. It is also highly stable in terms of training and inference. 
%Moreover, \cite{ye2019sparse} uses a gradient estimation procedure based on \cite{louizos2017learning} and leads to a high variance due to a single sample of the randomized loss function is used to estimate its expected value. In ~\cite{zheng2020robust}, the main goal is to learn a sparser graph to reduce overfitting but not scalability and thus lacks ability to handle large  graphs.

\section{\mname: Accelerating graph attention networks via edge sampling }\label{sec:model}
\subsection{The \mname algorithm}

Let $\bm{G}(\mathcal{E},\mathcal{V})$ be a graph with $N$ nodes and $M$ edges. An attention based GNN computes attention coefficients $\alpha_{i,j}$ for every pair of connected nodes $i, j \in \mathcal{V}$ in every layer $\ell$. The $\alpha_{i,j}$'s are then used as averaging weights to compute the layer-wise feature updates. In the original GAT formulation, the attention coefficients are
\begin{equation}
    \alpha_{ij} = \frac{\exp \left (\text{LeakyReLU} (\bm{a}^\top [ \bm{W}\bm{h_i} || \bm{W}\bm{h_j} ] ) \right ) } {\sum_{j \in \mathcal{N}_{i}} \exp \left (\text{LeakyReLU} (\bm{a}^\top [ \bm{W}\bm{h_i} || \bm{W}\bm{h_j} ] ) \right ) },
    \label{eq:attention}
\end{equation} where $\bm{h_i}$'s are the input node features to the layer, $\bm{W}$ and $\bm{a}$ are linear mappings that are learnt, $\mathcal{N}_{i}$ denotes the set of neighbors of node $i$, and $||$ denotes concatenation. With the $\alpha_{ij}$'s as defined above, the node-$i$ output embedding  of a GAT layer is 
\begin{equation}
    \bm{h_i}' = \sigma \left ( \sum_{j \in \mathcal{N}_{i}} \alpha_{ij}\bm{W} \bm{h_j} \right ).
    \label{eq:GAT}
\end{equation} For multi-head attention, the coefficients are computed independently in each attention head with head-dependent matrices $\mW$ and attention vector $\va$. Note that the computational burden in GATs arises directly from computing the $\alpha_{i,j}$'s in \textit{every layer, every attention head and every forward pass} during training. 

\noindent{\bf Goal:}  Our objective is to achieve performance equivalent to that of full graph attention networks (GAT), but with only a fraction of the original computational complexity.
This computational saving is achieved by reducing the number of attention computations. %($q \ll M$). 

\noindent{\bf Idea:} We propose to use {\bf edge-sampling functions} that sparsify graphs by removing nonessential edges. This leads to direct reduction in the number of attention coefficients to be computed, hence reducing the burden. Choosing the sampling function  is crucial for retaining the graph connectivity.

Let $\es(E,\mA,q)$ denote a \textit{randomized sampling function} that, given an edge set $E$, adjacency matrix $\mA$ and a number of edges to be sampled $q$, returns a subset of the original edge set $E_s \subset E$ with $|E_s| = q$. Our algorithm then uses this function to sparsify the graph in every layer and attention head. Following this, the attention coefficients are computed \textit{only for the remaining edges}. A more detailed description is given in Algorithm \ref{algo:FastGAT}. In every layer and attention head, a randomized subgraph with $q \ll M$ edges is generated and the attention coeffients are computed only for this subset of edges.
We use a specialized distribution that depends on the contribution of each edge to the graph connectivity. We provide further details in Section \ref{subsec:Reff}. 

\begin{algorithm}[h!]
\SetAlgoLined
\textbf{Input:} Graph $G(\mathcal{V}, \mathcal{E})$, Num. layers = $L$, Num. Attention heads $K^{(\ell)}$, $\ell = 1, \ \cdots, \ L$ \\
\hspace{0.4in} Initial Weight matrices $\mW^{(\ell)}$, Non-linearity $\sigma$, Feature matrix $H \in \R^{N\times D}$ \\
\hspace{0.4in} Randomized edge sampling function $\es(\cdot)$, Attention function $\phi(\cdot)$\\
\hspace{0.4in} Num. edges sampled $q$\\
\For{each layer $\ell$}{
\For{each attention head $k \in \{1, 2, \ \cdots, \ K^{(\ell)}\}$}{
Sample a graph $g^{(\ell)}_k = \es(E,\mA, q)$\;
Compute $\Gamma^{(\ell)}_k \in \R^{N \times N}, \ \Gamma^{(\ell)}_k(i,j) = \phi_{\theta_{k}}(\vh^{(\ell)}_i, \vh^{(\ell)}_j)$ if nodes $i,j$ are connected in $g^{(\ell)}_k$\;
Compute $\mH^{(\ell+1)}_k = \sigma \left (\Gamma^{(\ell)}_k \mH^{(\ell)}_k \mW^{(\ell)} \right )$\;
}
$\mH^{(\ell+1)} = \underset{k}{||}\mH^{(\ell)}_k$ \tcp{Concatenate the output of attention heads}
}
Compute loss and update $\mW$'s \tcp{gradient based weight update}
\caption{The \mname Algorithm}
\label{algo:FastGAT}
\end{algorithm}
Although in Algorithm \ref{algo:FastGAT} we sample a new subgraph in every layer and attention head, variations of this algorithm maybe used, depending on the cost the sampling function itself. Two simpler variations of \mname include: i) \mname-const, where the subgraph $g$ is kept constant in all the layers and attention heads and ii) \mname-layer, where the subgraph is differnet in each layer, but the same across all the attention heads within a layer.

\subsection{Sampling graph edges using effective resistances}
\label{subsec:Reff}
We use a particular edge sampling function $\es(\cdot)$ that is motivated by the field of spectral graph sparsification.
%We use the following notation to describe our edge sampling function. 
Let $\bm{L}$ represent the graph Laplacian (defined as $\bm{L} = \bm{D}-\bm{A}$ where $\bm{D}$ is the degree matrix), $\lambda_i(\bm{L})$ denote the $i$th eigenvalue of $\bm{L}$ and let $\bm{A}^\dagger$ denote the Moore-Penrose inverse of a matrix. 

Motivated by the fact that GNNs are approximations of spectral graph convolutions (defined in \eqref{eq:gconv}), we aim to preserve the spectrum (or eigenstructure) of the graph. 
Formally, let $\bm{L_G}$ and $\bm{L_H}$ be the Laplacian matrices of the original graph $\bm{G}$ and the sparsified graph $\bm{H}$. Then,  spectral graph sparsification ensures that the spectral content of $\bm{H}$ is similar to that of $\bm{G}$:
\begin{equation}
    (1-\epsilon) \lambda_i (\bm{L_G}) \leq \lambda_i (\bm{L_H}) \leq (1+\epsilon) \lambda_i (\bm{L_G}), \ \forall i
    \label{eq:eps-guaranty}
\end{equation} where $\epsilon$ is any desired threshold. ~\citep{spielman11graph} showed how to achieve this by using a distribution proportional to the effective resistances of the edges 

% As in \cite{spielman11graph}, we prune graph edges using the effective resistances of the graph edges as described in Algorithm \ref{algo:main}. While the theoretical guarantees in \cite{spielman11graph} apply to particular quadratic functions of the graph Laplacian, in this paper we provide theoretical guarantees and experimental evidence of the utility of the above sparsification process in the context of graph neural networks.

\begin{definition}[Effective Resistance] \citep{spielman11graph}
The effective resistance between any two nodes of a graph can be defined as the potential difference induced across the two nodes, when a unit current is induced at one node and extracted from the other node. Mathematically, it is defined as below.
\begin{equation*}
R_e(u,v) = \bm{b_e}^\top \bm{L}^{\dagger} \bm{b_e},
\end{equation*} 
where $\bm{b_e} = \bm{\chi_u} - \bm{\chi_v}$ ($\bm{\chi_l}$ is a standard basis vector with $1$ in the $l$th position) and $\bm{L}^\dagger$ is the pseudo-inverse of the graph Laplacian matrix.
\label{def:reff}
\end{definition}
The effective resistance measures the importance of an edge to the graph structure. For example, the removal of an edge with high effective resistance can harm the graph connectivity. The particular function $\es$ we use in \mname is described in Algorithm \ref{algo:main}.

\begin{algorithm}[h!]
\SetAlgoLined
 \textbf{Input}: Graph $\bm{G}(\mathcal{E}_G, \mathcal{V})$, $w_e$ is the edge weight for $e \in \mathcal{E}$, $ 0< \epsilon < 1$ \;
For each edge $e(u,v)$, compute $R_e(u,v)$ using fast algorithm in \citep{spielman11graph} \\
Set $q = \max(M,\mathrm{int}( 0.16 N \log N/\epsilon^2 ))$, $\bm{H}$ = Graph($\mathcal{E}_H = \mathrm{Null}$, $\mathcal{V}$)\\
\For{$i\leftarrow 1$ \KwTo $q$}{ Sample an edge $e_i$ from the distribution $p_e$ proportional to $w_e R_e$ \;
\If{$e_i \in \mathcal{E}_H$}{Add $w_e/qp_e$ to its weight
\tcp*{Increase  weight of an existing edge}}
\lElse{Add $e_i$ to $ \mathcal{E}_H$ with weight $w_e/ qp_e$
\tcp*{Add the edge for the first time}}} 
$\bm{H}$ = Graph$(\mathcal{E}_H, \mathcal{V})$\\ \caption{Effective resistance based $\es$ function \citep{spielman11graph}}
 \label{algo:main}
\end{algorithm}
The effective-resistance based edge-samplng function is described in Algorithm \ref{algo:main}. For a desired value of $\epsilon$, the algorithm sampled $q = O(N\log N/\epsilon^2)$ number of edges such that \eqref{eq:eps-guaranty} is satisfied.

\noindent\textbf{Choosing $\epsilon$}.  As shown in Algorithm.~\ref{algo:main}, it requires setting a pruning parameter $\epsilon$, which determines the quality of approximation after sparsification and also determines the number of edges retained $q$. The choice of $\epsilon$ is a design parameter at the discretion of the user. 
To remove the burden of choosing $\epsilon$, we also provide an \textbf{adaptive algorithm} in Section \ref{subsec: epsilon_effect} in the appendix.

\noindent\textbf{Complexity}. The sample complexity $q$ in Algorithm.~\ref{algo:main} directly determines the final complexity. If $q = O(N \log N / \epsilon^2)$, then the spectral approximation in \eqref{eq:eps-guaranty} can be achieved \citep{spielman11graph}. Note that this results in a number of edges that is almost linear in the number of nodes, as compared to quadratic as in the case of dense graphs. 
%Of course, an evident question here is that of 
The complexity of computing $R_e$ for all edges is $O(M \log N)$  time, where $M$ is the number of edges~\citep{spielman11graph}. While we describe the algorithm in detail in the appendix (Section \ref{subsec:compute_reff}) , it uses a combination of fast solutions to Laplacian based linear systems and the Johnson-Lindenstrauss Lemma \footnote{https://en.wikipedia.org/wiki/Johnson-Lindenstrauss\_lemma}.
This is almost linear in the number of edges, and hence much smaller than the complexity of computing attention coefficients in every layer and forward pass of GNNs. Another important point is that the computation of $R_e$'s is a one-time cost. Unlike graph attention coefficients, we do not need to recompute the effective resistances in every training iteration. Hence, once sparsified, the same graph can be used in all subsequent experiments. Further, since each edge is sampled independently, the edge sampling process itself can be parallelized.

\section{Theoretical Analysis of \mname}
\label{sec:theory}
 %Our insight that preserving structural properties of a graph can provide sufficient information for GATs and the convincing experimental results that we report in Section \ref{sec:exp} raise many challenging and interesting theoretical questions.
 In this section we provide the theoretical analysis of \mname.
 Although we used the sampling strategy provided in \citep{spielman11graph}, their work address the preservation of only the eigenvalues of $\bm{L}$. However, we are interested in the following question: {\it Can preserving the spectral structure of the graph  lead to good performance under the GAT model?} To answer this question, we give an upper bound on the error between the feature updates computed by a single layer of the GAT model using the full graph and a sparsified graph produced by \mname.

Spectral sparsification preserves the spectrum of the underlying graph. This then hints that neural network computations that utilize spectral convolutions can be approximated by using sparser graphs. We first show that this is true in a layer-wise sense for the GCN \citep{kipf2016semi} model and then show a similar result for the GAT model as well. Below, we use ReLU to denote the standard Rectified Linear Unit and ELU to denote the  Exponential Linear Unit.

\begin{theorem} At any layer $l$  of a GCN model with input features $\bm{H}^{(l)} \in \R^{N \times D }$, weight matrix $\bm{W}^{(l)} \in \R^{D \times F}$, if the element-wise non-linearity function $\sigma$ is either the ReLU or the ELU function, the features $\bm{\widehat{H}_f}$ and $\bm{\widehat{H}_s}$ computed using \eqref{eq:GCN} with the full and a layer dependent spectrally sparsified graph obey 
\begin{equation}
    \norm{\bm{\widehat{H}_f} - \bm{\widehat{H}}_s}_F \leq 4\epsilon \norm{\bm{\Lsym}} \norm{\bm{HW}}_F.
\end{equation} where $\Lsym$ is as defined in \eqref{eq:Lsymdef} and $\norm{\cdot}$ denotes the spectral norm.
\label{thm:GCN}
\end{theorem}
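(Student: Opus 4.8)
The plan is to reduce the bound to a spectral-norm estimate on the difference of the two layer propagation operators, and then to control that difference with the Loewner-order form of the spectral sparsification guarantee. Write a GCN layer as $\bm{\widehat{H}} = \sigma(\bm{P}\,\bm{H}^{(l)}\bm{W}^{(l)})$, where $\bm{P} = \widetilde{\bmD}^{-1/2}\widetilde{\bmA}\widetilde{\bmD}^{-1/2}$ is the self-loop--augmented normalized propagation matrix of the graph used at layer $l$, so that $\bm{\widehat{H}_f} = \sigma(\bm{P}_f\,\bm{H}^{(l)}\bm{W}^{(l)})$ and $\bm{\widehat{H}_s} = \sigma(\bm{P}_s\,\bm{H}^{(l)}\bm{W}^{(l)})$ for the full and the (layer-dependent) sparsified graphs. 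Both ReLU and ELU are $1$-Lipschitz on $\R$ (the ELU derivative equals $1$ for $x\ge 0$ and $e^{x}\le 1$ for $x<0$), so applying $\sigma$ entrywise is $1$-Lipschitz in Frobenius norm; hence
\[
\norm{\bm{\widehat{H}_f} - \bm{\widehat{H}_s}}_F \;\le\; \norm{(\bm{P}_f - \bm{P}_s)\,\bm{H}^{(l)}\bm{W}^{(l)}}_F \;\le\; \norm{\bm{P}_f - \bm{P}_s}\,\norm{\bm{HW}}_F ,
\]
with $\norm{\cdot}$ the spectral norm, so it remains to show $\norm{\bm{P}_f - \bm{P}_s} \le 4\epsilon\,\norm{\Lsym}$.

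For that step, note that adding self-loops leaves the combinatorial Laplacian unchanged, $(\bmD+\bm{I}) - (\bmA+\bm{I}) = \bmD - \bmA =: \bm{L}$, so $\bm{P} = \bm{I} - \widetilde{\bmD}^{-1/2}\bm{L}\,\widetilde{\bmD}^{-1/2}$ and therefore $\bm{P}_f - \bm{P}_s = \widetilde{\bmDs}^{-1/2}\bm{L}_H\widetilde{\bmDs}^{-1/2} - \widetilde{\bmD}^{-1/2}\bm{L}\widetilde{\bmD}^{-1/2}$, where $\bm{L}_H,\bmDs$ are the Laplacian and degree matrix of the sparsified graph. The construction of Algorithm~\ref{algo:main} in fact yields the Loewner sandwich $(1-\epsilon)\bm{L} \preceq \bm{L}_H \preceq (1+\epsilon)\bm{L}$ (strictly stronger than the eigenvalue bound in~\eqref{eq:eps-guaranty}); evaluating it on standard basis vectors gives $(1-\epsilon)\bmD \preceq \bmDs \preceq (1+\epsilon)\bmD$, and hence the same two-sided bound relating $\widetilde{\bmD}$ and $\widetilde{\bmDs}$. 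I would then split
\[
\bm{P}_f - \bm{P}_s \;=\; \widetilde{\bmDs}^{-1/2}(\bm{L}_H - \bm{L})\widetilde{\bmDs}^{-1/2} \;+\; \bigl(\widetilde{\bmDs}^{-1/2}\bm{L}\,\widetilde{\bmDs}^{-1/2} - \widetilde{\bmD}^{-1/2}\bm{L}\,\widetilde{\bmD}^{-1/2}\bigr),
\]
and bound each summand using the elementary fact that a symmetric $\bm{B}$ with $-\epsilon\bm{A} \preceq \bm{B} \preceq \epsilon\bm{A}$ and $\bm{A}\succeq 0$ satisfies $\norm{\bm{B}} \le \epsilon\norm{\bm{A}}$, together with operator monotonicity of the square root applied to the degree bounds (writing $\widetilde{\bmDs}^{-1/2}\bm{L}\widetilde{\bmDs}^{-1/2} = \bm{F}\,\widetilde{\bmD}^{-1/2}\bm{L}\widetilde{\bmD}^{-1/2}\bm{F}$ for the diagonal $\bm{F}=\widetilde{\bmD}^{1/2}\widetilde{\bmDs}^{-1/2}$) and the observation $\norm{\widetilde{\bmD}^{-1/2}\bm{L}\widetilde{\bmD}^{-1/2}} \le \norm{\bmD^{-1/2}\bm{L}\bmD^{-1/2}} = \norm{\Lsym}$. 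Absorbing the resulting factors of the form $1/(1-\epsilon)$ into a universal constant (valid for $\epsilon$ below a fixed threshold) gives $\norm{\bm{P}_f - \bm{P}_s} \le 4\epsilon\,\norm{\Lsym}$, which with the first display proves the theorem.

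The step I expect to be the main obstacle is precisely this conversion: the sparsification guarantee naturally controls the combinatorial Laplacian $\bm{L}=\bmD-\bmA$, whereas the GCN propagation uses the symmetrically normalized Laplacian whose degrees are themselves perturbed by sparsification and shifted by the self-loop augmentation $\widetilde{\bmA}=\bmA+\bm{I}$. Making the conjugation-by-degree-matrix estimates clean — in particular pushing $\widetilde{\bmDs}^{-1} \preceq (1-\epsilon)^{-1}\widetilde{\bmD}^{-1}$ through the square roots and the intervening PSD factor $\bm{L}$ — is exactly what inflates the constant from the ``ideal'' $\epsilon$ to $4\epsilon$. (If one instead simply assumes the Loewner sandwich directly for the normalized operator $\bm{P}$, the middle paragraph collapses to the one-line estimate $\norm{\bm{P}_f - \bm{P}_s} = \norm{\Lsym^{(f)} - \Lsym^{(s)}} \le \epsilon\,\norm{\Lsym}$, at the cost of the stated constant no longer being tight.)
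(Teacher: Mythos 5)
Your proof is correct and follows essentially the same route as the paper's: reduce via the $1$-Lipschitzness of ReLU/ELU and the inequality $\norm{\bm{A}\bm{B}}_F \le \norm{\bm{A}}\norm{\bm{B}}_F$ to a spectral-norm bound on the difference of the normalized propagation operators, then split that difference into a Laplacian-perturbation term and a degree-renormalization term, each controlled by the Loewner (quadratic-form) version of the $\epsilon$-sparsification guarantee. If anything you are slightly more careful than the paper, which silently drops the self-loop augmentation $\widetilde{\bmA}=\bmA+\bm{I}$ and works directly with $\Lsym-\bm{I}$, and which likewise absorbs the higher-order $\epsilon$ terms into the constant $4$ under a small-$\epsilon$ assumption.
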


In our next result, we show a similar upper bound on the features computed with the full and the sparsified graphs using the GAT model.

\begin{theorem}
At any layer $l$ of GAT  with input features $\mH \in \R^{N \times D}$, weight matrix $\bm{W}^{(l)} \in \R^{D \times F}$ and $\alpha_ij$'s be the attention coefficients in that layer. Let the non-linearity used by either ReLU or the ELU functon. Then, the features $\bm{\widehat{H}_f}$ and $\bm{\widehat{H}_s}$ computed using \eqref{eq:GAT} with the full and a layer dependent spectrally sparsified graph obey 
\begin{equation}
    \norm{\bm{\widehat{H}_f} - \bm{\widehat{H}_s}}_F \leq 6 \epsilon \norm{\Lsym}\norm{\bm{HW}}_F
\end{equation} 
\label{thm:GAT} where $\norm{\cdot}$ denotes the spectral norm of the matrix.
\end{theorem}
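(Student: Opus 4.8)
The plan is to reduce the statement to a spectral-norm bound on the difference of the two layers' mixing operators, and then obtain that bound from the sparsification guarantee \eqref{eq:eps-guaranty} via the convolutional re-interpretation of GAT. First, stacking \eqref{eq:GAT} over all nodes writes a GAT layer as $\bm{\widehat H}=\sigma(\Gamma\,\bm{HW})$, where $\Gamma\in\R^{N\times N}$ is the (row-stochastic) attention matrix of that layer, $\Gamma_{ij}=\alpha_{ij}$ on the edges present in the graph and $0$ otherwise; $\bm{\widehat H_f}$ uses the attention matrix $\Gamma_f$ of the full graph and $\bm{\widehat H_s}$ the attention matrix $\Gamma_s$ of the layer's sparsified graph. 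Since ReLU and ELU are $1$-Lipschitz scalar maps, applying $\sigma$ entrywise is $1$-Lipschitz in Frobenius norm, and using $\norm{AB}_F\le\norm{A}\norm{B}_F$,
\begin{equation*}
\norm{\bm{\widehat H_f}-\bm{\widehat H_s}}_F \;\le\; \norm{(\Gamma_f-\Gamma_s)\bm{HW}}_F \;\le\; \norm{\Gamma_f-\Gamma_s}\,\norm{\bm{HW}}_F ,
\end{equation*}
so it suffices to prove $\norm{\Gamma_f-\Gamma_s}\le 6\epsilon\norm{\Lsym}$, on the event that the sampler of Algorithm~\ref{algo:main} achieves \eqref{eq:eps-guaranty} (which holds with high probability for its sample size $q$).

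Second, I would invoke the re-interpretation of a GAT layer as a spectral graph convolution (established in the Appendix): the attention matrix $\Gamma$ is the random-walk propagation matrix $\mI-\Lrw$ of an auxiliary graph whose edge weights are the positive attention scores $c_{ij}=\exp\!\big(\mathrm{LeakyReLU}(\bm a^\top[\bm{W}\bm{h_i} \,||\, \bm{W}\bm{h_j}])\big)$. Restricting that graph to the sampled edge set and renormalizing produces $\Gamma_s$, the corresponding propagation matrix on the sparsified auxiliary graph. The two-sided Laplacian bound \eqref{eq:eps-guaranty} then transfers to the propagation matrices by exactly the chain of estimates used to prove Theorem~\ref{thm:GCN} — conjugating to the symmetric normalization, bounding the change of the normalized Laplacian, and passing back — except that here one extra triangle-inequality term is required to absorb the change in the softmax normalizers caused by deleting edges; this is precisely what turns the constant $4$ of Theorem~\ref{thm:GCN} into $6$. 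Combining with the first step yields the claimed bound.

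The step I expect to be the real obstacle is the second one, for two related reasons. Unlike the GCN operator $\widetilde{\bmD}^{-1/2}\widetilde{\bmA}\widetilde{\bmD}^{-1/2}$, the attention matrix $\Gamma$ is only row-stochastic and not symmetric (already because $c_{ij}\neq c_{ji}$ in general), so $\norm{\Gamma}$ is not its top eigenvalue and the Loewner-order guarantee \eqref{eq:eps-guaranty} cannot be applied to it directly; one must route the argument through a symmetric surrogate — a $\bmD^{1/2}$-conjugate of $\Gamma$, or a symmetrization of the attention-weight matrix — and control the discrepancy this introduces. Second, $\Gamma_s$ is obtained by recomputing the softmax over the \emph{smaller} neighborhoods $\mathcal{N}_i^{(s)}\subseteq\mathcal{N}_i$, not by merely restricting and rescaling $\Gamma_f$, so one has to show that this renormalization perturbs $\Gamma$ by only $O(\epsilon)$ — which follows from the fact that the sparsifier approximately preserves each node's (reweighted) degree, but needs to be made quantitative. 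Everything else is a routine re-run of the estimates already carried out for Theorem~\ref{thm:GCN}.
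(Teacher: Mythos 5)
Your plan coincides with the paper's own proof: the paper writes the GAT layer as $\sigma((\Lrw-\mI)\bm{HW})$ for the attention-weighted graph, conjugates via $\Lrw=\bmD^{-1/2}\Lsym\bmD^{1/2}$ exactly as you propose, and bounds $\norm{\Lrw-\Lrws}$ by the Theorem~\ref{thm:GCN} term $\norm{\Lsym-\Lsyms}\le 4\epsilon\norm{\Lsym}$ plus one extra triangle-inequality term (controlled by a small lemma on $\norm{\bmA-\bmD^{-1}\bmA\bmD}$) that absorbs the change in the softmax normalizers and raises the constant from $4$ to $6$. The two obstacles you flag are handled in the paper exactly as you anticipate, so your proposal is essentially the paper's argument.
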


Theorem \ref{thm:GAT} shows that our proposed layer-wise spectral sparsification leads to good approximations of latent embedding $\bm{\widehat{H}}$ for GAT model as well. The guarantees given above assume a layer-wise sparsification that is updated based on the attention coefficients. To circumvent the associated computational burden, we use the simpler versions such as \ref{algo:FastGAT}-const and always use the original weight matrix to sparsify the graph in each layer. In the experiment section, we show that such a relaxation by a one-time spectral sparsification does not lead to any degradation in performance.

\textbf{Approximation of weight matrices}.
Theorems \ref{thm:GCN} and \ref{thm:GAT} provide an upper bound on the feature updates obtained using the full and sparsified graphs. In practice, we  observe an even stronger notion of approximation between GAT and \mname: the weight matrices of the two models post training are  good approximations of each other. We report this observation in Section.~\ref{subsec:approx_weight} in the appendix. We show that the error between the learned matrices is small and proportional to the value of $\epsilon$ itself.

\section{Experiments}
\label{sec:exp}

% In this section, we design experiments to answer the following questions. Note that our evaluation primarily focuses on the transductive task of node classification.

% \noindent{\bf Q1:} Can \mname help accelerate GAT significantly while also attaining strong accuracy?

% \noindent{\bf Q2:} How does \mname compare with other GNN speedup algorithms?

% \noindent{\bf Q3:} Does \mname  need more epochs to achieve equivalent accuracy level as using full graphs?

% \subsection{Experimental Setup}
% \label{sec:setup}

% First, we describe the experimental setup, including the dataset, baselines, metrics and evaluation strategies. 

\noindent\textbf{Datasets}
We evaluated \mname on large  graph datasets using  semi-supervised node classification tasks. This is a standard task to evaluate GNNs, as done in \citep{velickovic2018graph, hamilton2017inductive,kipf2016semi}. Datasets are sourced from the DGLGraph library \citep{DGL}. Their statistics  are provided in Table \ref{tab:data_descr}. We evaluate on both transductive and inductive tasks. The PPI dataset serves as a standard benchmark for inductive classification and the rest of the datasets for transductive classification. Further details about the datasets including details about train/validaton/ test split are given in the appendix (Section \ref{subsec:data}). We also evaluated on smaller datasets including Cora, citeseer and Pubmed, but present their results in the appendix (Section \ref{subsec:small_data}).

\begin{table}[h!]
\vskip -6pt
    \centering
       \caption{Dataset Statistics}
    \label{tab:data_descr}
    \resizebox{\textwidth}{!}{
    \begin{tabular}{l|ccccccc}
        \toprule
         Dataset & Reddit & Coauth-Phy & Github  &  Amaz.Comp & Coauth-cs . & Amaz.Photos & PPI (Inductive task)   \\
        \midrule
         Nodes & 232,965 & 34,493& 37,700 & 13,752 & 18,333  &  7,650 & 56944 (24 graphs)      \\ 

         Edges & 57 mil &  495,924 & 289,003  & 287,209 & 163778 & 143,662 & 818716 \\
   
         Classes  & 41 & 5 & 2 & 10 & 15  &  8  & 121 (multilabel)  \\
         \bottomrule
    \end{tabular}
    }
\end{table}

\noindent\textbf{Baselines}. \textbf{Transductive learning:}
% \cx{please add baseline information}
We compar \mname with the following baseline methods. (1) The original graph attention networks (GAT)~\citep{velickovic2018graph}, (2) SparseGAT \citep{ye2019sparse} that learns edge coefficients to sparsify the graph, (3) random subsampling of edges, and (4) FastGCN~\citep{chen2018fastgcn} that is also designed for GNN speedup. Note that we compare SparseGAT in a transductive setting, whereas the original paper \citep{ye2019sparse} uses an inductive setting. We thus demonstrate that \mname can handle the full input graph, unlike any previous attention based baseline method. \textbf{Inductive learning:} For this task, we compare with both GAT \citep{velickovic2018graph} and GraphSAGE \citep{hamilton2017inductive}. More importantly, for both inductive and transductive tasks, we show that a uniform random subsampling of edges results in a drop in performance, where as \mname does not.

\noindent\textbf{Evaluation setup} and \textbf{Model Implementation Details} are provided in Section.~\ref{sec:appendex-detail} in the appendix.

\subsection*{Q1. \mname provides faster training with state-of-the-art accuracy. }

Our first experiment is to study \mname on the accuracy and time performance of attention based GNNs in  node classification.  We sample  $q= \mathrm{int}(0.16 N\log N/ \epsilon^2 )$  number of edges from the distribution $p_e$ with replacement, as described in Section \ref{subsec:Reff}. 

\textbf{Transductive learning:} In this setting, we assume that the features of all the nodes in the graph, including train, validation and test nodes are available, but only the labels of  training nodes are available during training, similar to \citep{velickovic2018graph}. First, we provide a direct comparison between \mname and the original GAT model and report the results in Table \ref{tab:direct_comparison}. As can be observed from the results, \mname achieves the same test accuracy as the full GAT  across all  datasets, while being dramatically faster:   we are able to achieve up to \textbf{5x} on GPU (\textbf{10x} on CPU) speedup.

We then compare  \mname with the following baselines: SparseGAT \citep{ye2019sparse}, random subsampling of edges and FastGCN \citep{chen2018fastgcn}  in Table \ref{tab:acc_attn_gnn}. SparseGAT uses the attention mechanism to learn embeddings and a sparsifying mask on the edges. We compare the training time per epoch for the baseline methods against \mname in Figure \ref{fig:temporal comparison}. The results shows that \textbf{\mname matches state-of-the-art accuracy (F1-score), while being much faster}.  While random subsampling of edges leads to a model that is as fast as ours but with a degradation in accuracy performance. \mname is also faster compared to FastGCN on some large datasets, even though FastGCN does not compute any attention coefficients. Overall the classification accuracy of \mname remains the same (or sometimes even improves) compared to standard GAT, while the training time reduces drastically. This is most evident in the case of the Reddit dataset, where the vanilla GAT model \textbf{runs out of memory} on a machine with 128GB RAM and a Tesla P100 GPU when computing attention coefficients over $57$ million edges, while \mname can train with 10 seconds per epoch.

\begin{table}[h!]
\centering
\caption{Comparison of \mname with GAT~\citep{velickovic2018graph}.}
\label{tab:direct_comparison}
\resizebox{\textwidth}{!}{
\begin{tabular}{ll|c c c c c c c}
\toprule
   Metric & Method & Reddit & Phy & Git & Comp & CS & Photo \\
     \midrule
   \multirow{3}{*}{F1-micro}&  GAT & OOM & 0.94\rpm 0.001 & 0.86\rpm0.000 & 0.89\rpm 0.004 & 0.89\rpm0.001 & 0.92\rpm0.001 \\
   ~ & \mname-const-0.5 & 0.93\rpm0.000 & 0.94\rpm0.001 & 0.86\rpm0.001 & 0.88\rpm0.004 & 0.88\rpm0.001 & 0.91\rpm0.002 \\
  ~ &  \mname-const-0.9 & 0.88\rpm0.001 & 0.94\rpm0.002 & 0.85\rpm0.002 & 0.86\rpm0.002 & 0.88\rpm0.004  & 0.89\rpm0.002 \\
     \midrule
   \multirow{3}{*}{GPU Time (s)}&  GAT & OOM & 3.67 & 3.71 & 2.93 & 1.61 & 1.88  \\
    ~&\mname-const-0.5 & 7.73 & 1.96 & 2.06 & 0.83 &  1.14 & 0.66 \\
    ~&\mname-const-0.9& 4.07  &1.80 & 1.63 & 0.50& 0.95 & 0.40  \\
    \midrule
     \multirow{3}{*}{CPU Time (s)}&GAT & OOM & 25.19 & 18.05 & 16.58 & 3.59 & 6.57\\
     ~& \mname-const-0.5 & 178.79 & 4.42 & 5.92 & 2.58 & 2.27 & 1.72\\
    ~ & \mname-const-0.9 & 41.42 & 2.70 & 3.44 & 1.22 & 1.48 & 0.77 \\
    \midrule
   \multirow{2}{*}{\% Edges redu.}&\mname-const-0.5 & 97.03\% & 74.04\% & 55.6\% & 78.6\% & 67\% & 79.3\%  &\\ 
   ~& \mname-const-0.9 & 99.03\% & 88.48\% & 79.77\%  & 91.99\% & 83.80\% &  91.86\% \\
    \bottomrule
\end{tabular}
}
\vskip -10pt
\end{table}

\begin{figure}[h!]
% \vskip -10pt
\vspace{4pt}
    \centering
    \includegraphics[scale=0.55]{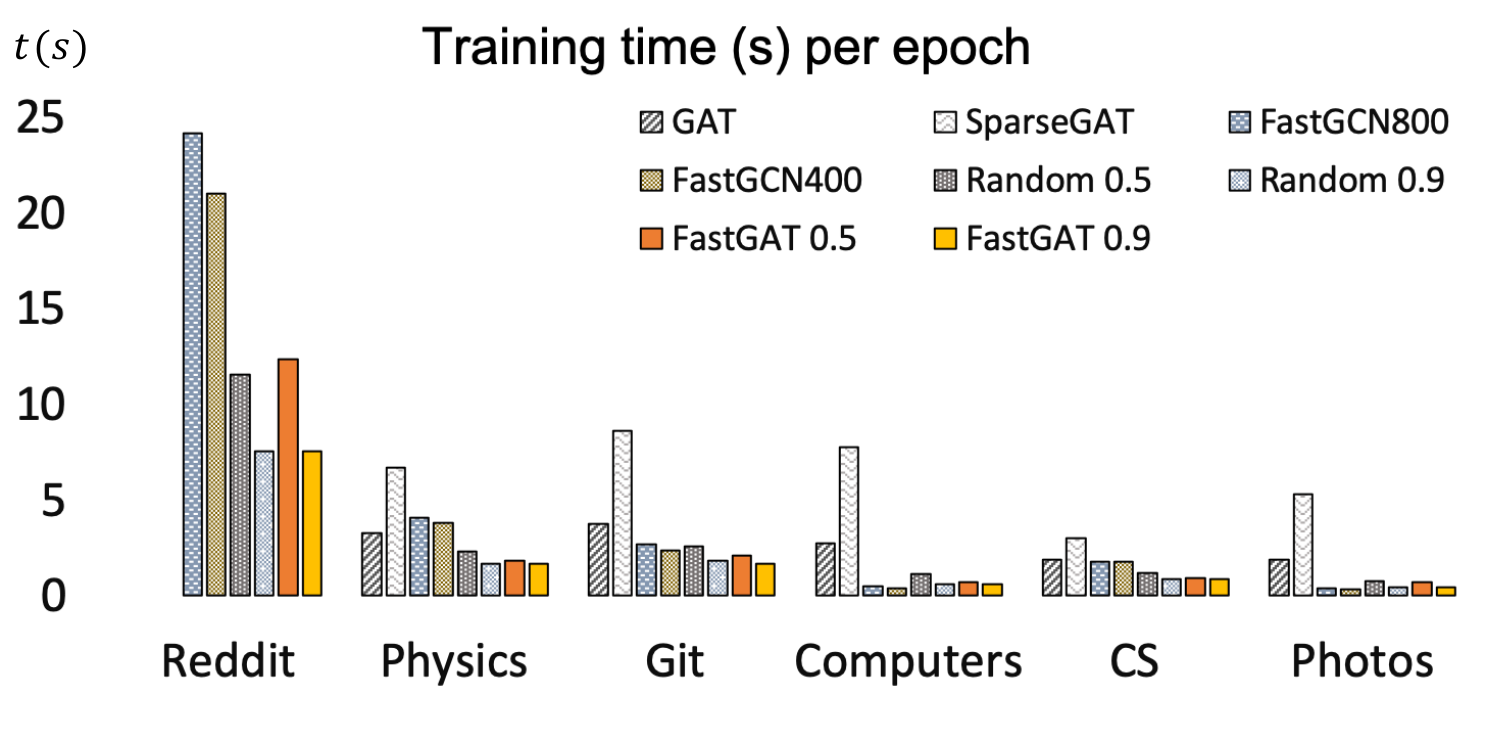}
    \vskip -8pt
    \caption{Training time per epoch for our \mname model and  baselines. \mname has a significantly lower training time, while also preserving classification accuracy (in Table.~\ref{tab:acc_attn_gnn}). On the Reddit dataset, \textcolor{purple}{both GAT and sparseGAT models run out of memory} in the transductive setting and hence we do not include bar graphs for them in the plot above.}
    \label{fig:temporal comparison}
    \vskip -10pt
\end{figure}

\begin{table}[h!]
    \centering
      \caption{\textbf{Transductive learning:} While significantly lower training time (in Fig.~\ref{fig:temporal comparison}), \mname still has comparable and sometimes better accuracy than other models. Computationally intensive model sparseGAT runs out of memory on the Reddit dataset. }
    \label{tab:acc_attn_gnn}
    \resizebox{\textwidth}{!}{
    \begin{tabular}{l|c c c c c c}
    \toprule
        Method& Reddit  & Coauthor-Phy & Github & Amazon-Comp & Coauthor-CS  & Amazon-Photos  \\
        \midrule
        FastGCN-400 \citep{chen2018fastgcn} & 0.77\rpm0.001 & \textbf{0.95\rpm0.001} & \textbf{0.86\rpm0.001} & 0.81\rpm0.002 & \textbf{0.92\rpm0.001} & 0.86\rpm0.005 \\
        FastGCN-800 \citep{chen2018fastgcn} & 0.81\rpm0.001 & \textbf{0.95\rpm0.001} & \textbf{0.86\rpm0.001} & 0.80\rpm0.009 & 0.92\rpm0.001 & 0.86\rpm0.007\\
        GAT rand 0.5 & 0.87\rpm0.002 & 0.94\rpm0.001 & 0.85\rpm0.001 & 0.86\rpm0.003 & 0.88\rpm0.002 & 0.90\rpm0.005\\
        GAT rand 0.9 & 0.82\rpm0.005 & 0.93\rpm0.000 & 0.84\rpm0.002 & 0.83\rpm0.004 & 0.88\rpm0.002 & 0.89\rpm0.004\\
        sparseGAT \citep{ye2019sparse} &OOM & 0.94\rpm 0.004 & 0.84\rpm 0.002 &0.85\rpm0.001 & 0.88\rpm 0.002 & \textbf{0.91\rpm 0.003}\\
         \midrule
        \mname-0.5 & \textbf{0.93\rpm 0.001} & \textbf{0.95\rpm 0.001} & \textbf{0.86\rpm 0.002}  &  \textbf{0.88 \rpm 0.003}& 0.88\rpm 0.002  & 0.90\rpm 0.002 \\
          \mname-0.9&  0.91\rpm 0.002 & 0.94\rpm 0.002  & \textbf{0.86\rpm 0.000} & 0.85\rpm0.002 & 0.88\rpm 0.002 & 0.90\rpm 0.002\\
        \mname-const-0.5 & \textbf{0.93\rpm0.000} & 0.94\rpm0.001 & \textbf{0.86\rpm0.001} & \textbf{0.88\rpm0.004} & 0.88\rpm0.001 & \textbf{0.91\rpm0.002}  \\
         \mname-const-0.9& 0.88\rpm0.001 & 0.94\rpm0.002 & 0.85\rpm0.002 & 0.86\rpm0.002 & 0.88\rpm0.004  & 0.89\rpm0.002\\

         \bottomrule
    \end{tabular}}
    \vskip -10pt
\end{table}

\blfootnote{In Tables \ref{tab:direct_comparison} and \ref{tab:acc_attn_gnn}, FastGCN-400 denotes that we sample 400 nodes in every forward pass, as described in \citep{chen2018fastgcn} (similarly, in FastGCN-800, we sample 800 nodes). \mname-0.5 denotes we use  $\epsilon=0.5$. GAT rand 0.5 uses random subsampling of edges, but keeps the same number of edges as \mname-0.5}

\textbf{Inductive learning}.
\mname can also be applied to the inductive learning framework, where features of only the training nodes are available and training is performed using the subgraph consisting of the training nodes. To show the utility of \mname in such a setting, we use the Protein-Protein interaction (PPI) dataset (\citep{zitnik2017predicting}). Our model parameters are the same as in \citep{velickovic2018graph}, but we sparsify each of the 20 training graphs before training on them. The other 4 graphs are used for validation and testing (2 each). We use $\epsilon = 0.25, 0.5$ in our experiments, since the PPI dataset is smaller than the other datasets. We report the experimental results in Table \ref{tab:PPI}. \mname clearly outperforms the baselines like GraphSAGE and uniform subsampling of edges. While it has the same accuracy performance as the GAT model (which is expected), it has a much smaler training time, as reported in Table \ref{tab:PPI}. 
 \begin{table}[h!]
    \centering
    \caption{\textbf{Inductive learning on PPI Dataset} The training time per epoch for the full GAT method is around 45.8s, whereas \mname requires only about 29.8s when $\epsilon = 0.25$ and $20$s when $\epsilon = 0.5$. \mname achieved the same accuracy as GAT, and outperforms the other baselines, while also begin computatioanally efficient. The F1 score for GraphSAGE was obtained from what is reported in \cite{velickovic2018graph}. }\label{tab:PPI}
    \begin{tabular}{l|cc}
    \toprule
         Method & F1 score \\
         \midrule
         GAT & \textbf{0.974 \rpm 0.002} \\
         GraphSAGE* & 0.768 \rpm 0.000   \\
         GAT rand-0.25 & 0.95 \rpm 0.005\\
         GAT rand-0.9 &  0.7\rpm 0.004\\
         \midrule
        %  \mname-0.25 &  0.974 $\pm$ &\\
        % \mname-0.5 & 0.8 $\pm$ &\\
        \mname-const-0.25 &  \textbf{0.974 $\pm$ 0.003}\\
         \mname-const-0.5 & 0.800 $\pm$ 0.005\\
         \bottomrule
    \end{tabular}
\end{table}

% \subsection*{Q2. \mname has the same rate of learning as GAT models}
% \label{subsec:rate}

% Our next goal is to study if \mname needs more epochs to achieve the same level of accuracy as that of using full graphs. Fig.~\ref{fig:rate}  shows consistent per epoch learning rate for multiple datasets. We show similar plots for the other datasets in the appendix (Section \ref{subsec: acc_epoch_plots_appendix}). 

% \begin{figure}[h!]
% \vskip -10pt
%     \centering
%     \includegraphics[width=0.3\textwidth]{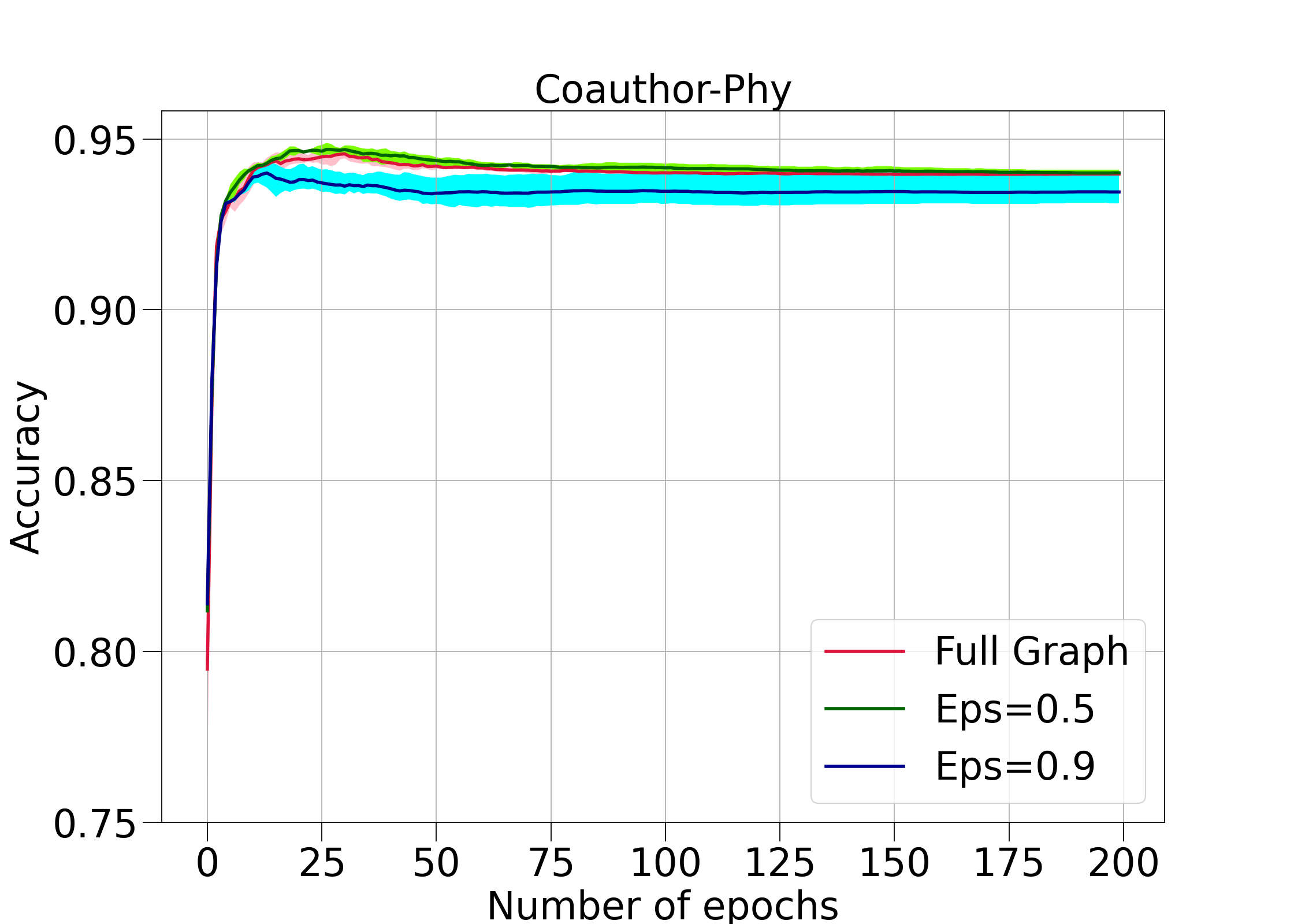}
%     \includegraphics[width=0.3\textwidth]{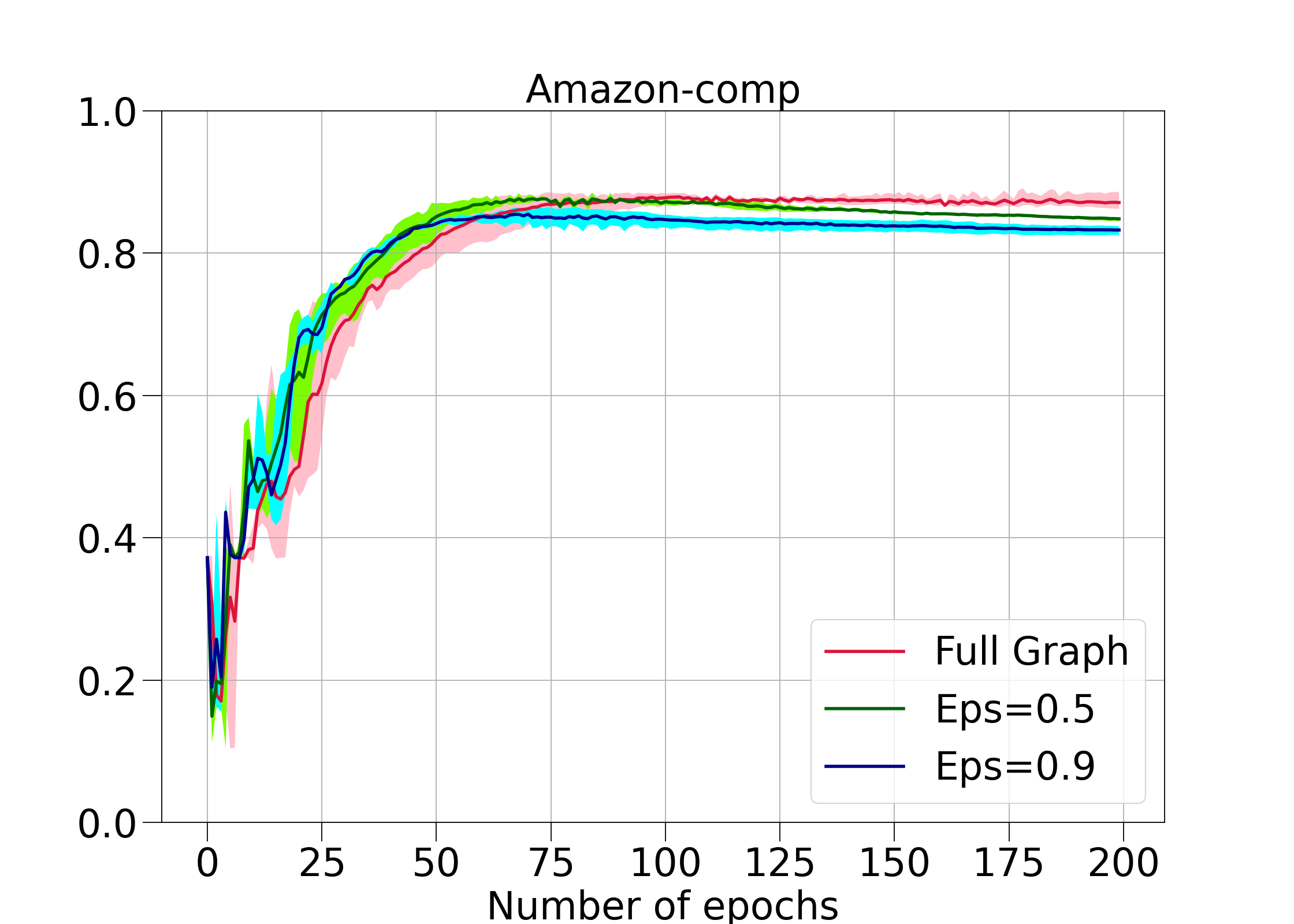}
%     \caption{Accuracy Vs number of training epochs for both full and sparsified graphs. The accuracy attained matched almost exactly for every epoch even when a sparsified graph is used. For each dataset, the plots were computed by averaging over 5 independent trials. 
%     }
%     \label{fig:rate}
%     \vskip -10pt
% \end{figure}

\subsection*{Q2. \mname can be applied to other attention based graph inference methods.}

Finally, we study if \mname is sensitive to the particular formulation of the attention function. There have been alternative formulations proposed to capture pairwise similarity. For example, \citep{thekumparampil2018attention}
proposes a \textbf{cosine similarity} based approach, where the attention coefficient of an edge is defined in Eq.~\eqref{eq:cosattn},
\begin{equation}
    \alpha_{ij}^{(\ell)}=\underset{j \in \mathcal{N}_i}{\mathrm{softmax}} \left ( [\beta^{(\ell)} \cos (\vh_i^{(\ell)}, \vh_j^{(\ell)} ) ] \right )
    \label{eq:cosattn}
\end{equation} where $\beta^{(\ell)}$ is a layer-wise learnable parameter and $\cos(\bm{x},\bm{y}) = \bm{x}^\top \bm{y}/ \norm{\bm{x}}\norm{\bm{y}}$. Another definition is proposed in \citep{zhang2018gaan} (\textbf{GaAN}:Gated Attention Networks), which defines attention as in Eq.~\eqref{eq:gaanattn}, 
\begin{equation}
    \alpha_{ij}^{(l)} = \underset{j \in \mathcal{N}_i}{\mathrm{softmax}} \  \langle \mathrm{FC}_{src} (\vh_i^{\ell}), \mathrm{FC}_{dst} (\vh_j^{\ell}) \rangle
    \label{eq:gaanattn}
\end{equation} where $\mathrm{FC}_{src}$ and $\mathrm{FC}_{dst}$ are 2-layered fully connected neural networks. 

We performed similar experiments on these attention definitions. Tables. \ref{tab:cosine} % and \ref{tab:GaAN} 
confirmed that \mname generalizes to different attention functions.% and hence can be used as a general tool to perform efficient training on large scale graphs. 
Note that the variability in accuracy performance across Tables \ref{tab:direct_comparison} and \ref{tab:cosine}  comes from the different definitions of the attention functions and not from \mname. Our goal is to show that given a specific GAT model, \mname can achieve similar accuracy performance as that model, but in much faster time. 

%\textbf{cosine similarity}\cite{thekumparampil2018attention}
\begin{table}[h!]
\centering
\caption{Comparison of full and sparsified graphs. }

\label{tab:cosine}
\resizebox{\textwidth}{!}{
\begin{tabular}{ll|c c c c c c }
\toprule
% \multicolumn{8}{c}{\mname for cosine similarity attention  GNNs \citep{thekumparampil2018attention}  }\\ 
  \multicolumn{2}{c}{\textbf{ \mname for  Cosine Similarity } } & Reddit & Phy & Github & Comp & CS &  Photo    \\
     \midrule
   \multirow{3}{*}{F1-micro}&  Full graph & OOM & 0.96\rpm0.001 & 0.86\rpm0.001 & 0.88\rpm0.003 &  0.91\rpm0.001 & 0.93\rpm0.001  \\
   ~ & $\epsilon = 0.5 $ & 0.93\rpm0.000 & 0.95\rpm0.001 & 0.86\rpm0.001  & 0.88\rpm0.003 & 0.91\rpm0.001 & 0.92\rpm0.002 \\
   ~ & $\epsilon=0.9$ & 0.88\rpm0.001 & 0.95\rpm0.003 & 0.86\rpm0.002 & 0.87\rpm0.004 & 0.89\rpm 0.002 & 0.90\rpm0.002  \\
\midrule
    \multirow{3}{*}{Time/epoch (s)}&  Full graph  & OOM & 3.39 & 3.89 & 2.80 & 1.54 & 1.89 
   \\
   ~ & $\epsilon = 0.5 $ & 8.02 & 1.99 &2.3 & 0.77 &  1.11 & 0.724 \\
   ~ & $\epsilon = 0.9$ & 4.46 & 1.79 & 1.69 & 0.48 & 0.91 & 0.435  \\
   \midrule\midrule
% \end{tabular}
% }
% \end{table}

% \begin{table}[h!]
% \centering
% \caption{Comparison of full and spectrally sparsified graphs for the \textbf{GaAN} model \citep{zhang2018gaan}  }

% \label{tab:GaAN}
% \resizebox{\textwidth}{!}{
% \begin{tabular}{ll|c c c c c c }
% \toprule
% \multicolumn{2}{c}{\mname  GaAN  }\\ 
  \multicolumn{2}{c}{\textbf{ \mname for  GaAN } } & Reddit & Phy & Github & Comp & CS &  Photo    \\
     \midrule
   \multirow{3}{*}{F1-micro}&  Full graph & OOM &  0.94\rpm 0.001 &  0.86\rpm 0001 & 0.86\rpm0.002 & 0.873\rpm 0.001 & 0.90\rpm0.001   \\
    ~ & $\epsilon=0.5$ & 0.92\rpm 0.001 & 0.94\rpm 0.002 &  0.86\rpm 0.002 &0.84\rpm0.001 & 0.87\rpm 0.002  & 0.89\rpm 0.003\\
   ~ &$\epsilon=0.9$ & 0.87\rpm 0.001 & 0.93\rpm 0.001 & 0.86\rpm 0.002 &0.82\rpm 0.002 &  0.84 \rpm 0.001 & 0.88\rpm 0.002 \\
    \midrule
    \multirow{3}{*}{Time/epoch (s)}& Full graph & OOM & 3.55 &  3.87 &  2.77 & 1.60 & 1.90  
   \\
   ~ & $\epsilon=0.5$ & 8.10 & 1.93 &  2.10 & 0.84 & 1.10 & 0.67  \\
   ~ & $\epsilon=0.9$ & 4.47  & 1.76& 1.54 & 0.50 &  0.90& 0.41  \\
   \bottomrule
\end{tabular}
}
\vskip -10pt
\end{table}

\section{Conclusion}

In this paper, we introduced \mname, a method to make attention based GNNs lightweight by using spectral sparsification. %We presented \mname in Section.~\ref{sec:model}, and  demonstrated in Section.~\ref{sec:exp} that 
%We provided a re-formulation to interpret the GAT model as a graph convolution method based on which 
We theoretically justified our \mname algorithm. 
\mname can significantly reduce the computational time across multiple large real world graph datasets while attaining state-of-the-art performance.

% \newpage

\bibliographystyle{iclr2021_conference}
\bibliography{FastGAT}

\appendix

\newpage

% \textbf{{Supplementary Material for Submission ID 3818}}

\appendix

{SUPPLEMENTARY MATERIAL - FAST GRAPH ATTENTION NETWORKS USING EFFECTIVE RESISTANCE BASED GRAPH SPARSIFICATION}

\section{Proofs}

% In this section, we provide thoeretical analysis of graph attention networks and the utility of spectral graph sparsification. For all our analysis, we assume that the attention vector $\bm{a}$ defined as in \cite{velickovic2018graph} is symmetric: 
% \begin{equation}
%     \bm{a}[1:N] = \bm{a}[N+1:2N].
% \end{equation}This renders the resulting attention coefficient matrix symmetric and hence the analysis is simpler.

\subsection{Reinterpretation of attention based GNNs as graph convolution models}

For our analysis, we assume the attention vector $\bm{a}$ defined in \citep{velickovic2018graph} is symmetric: $
    \bm{a}[1:N] = \bm{a}[N+1:2N].$ This results in a  symmetric attention coefficient matrix and  the analysis is simpler. Also note that the symmetric attention functions are used in practice as well \citep{thekumparampil2018attention}. Our first result is on the equivalence of  GAT  and graph convolutions. As stated in \eqref{eq:gconv}, graph convolution with kernel $\bm{g}$ can be defined using the eigenvectors and eigenvalues of the symmetric normalized graph Laplacian matrix $\Lsym$. Similarly we can define a convolution operation using an alternative of the normalized Laplacian, known as the random walk normalized Laplacian matrix, which is defined as:
\begin{equation*}
    \Lrw = \bmD^{-1}\bm{L} = \bm{I} - \bmD^{-1}\bmA = \bmD^{-1/2}\Lsym \bmD^{1/2}.
\end{equation*} The convolution operation with $\Lrw$ can then be defined as in Eq.~\eqref{eq:newconv}
\begin{equation}
    \bm{g}\newconv \bm{x} = \bmD^{-1/2} \bm{U} \bm{g}(\bm{\Lambda}) \bm{U}^T \bmD^{-1/2} \bm{x}.
    \label{eq:newconv}
\end{equation}

\begin{proposition}
Each layer in the GAT model defines a new, \textbf{layer-dependent graph adjacency matrix} $\mathbf{\Gamma}^{(l)}$ and a corresponding degree matrix $\mathbf{\Gamma_{\bmD}}^{(l)}$. Each layer then computes a first order approximation of the convolution operator $\widehat{\star}$ defined in \eqref{eq:newconv} using $\mathbf{\Gamma}^{(l)}$ and $\mathbf{\Gamma_{\bmD}}^{(l)}$:
\begin{equation*}
    \bm{H}^{(l+1)} = \sigma(\mathbf{ \Gamma_{\bmD}}^{(l)^{-1}} \mathbf{\Gamma}^{(l)} \bm{H}^{(l)} \bm{W}^{(l)}) \approx \sigma ( \bm{g}_{\mathbf{\Gamma}^{l}, \mathbf{\Gamma_D}^{(l)}}^{(l)}\newconv (\bm{H}^{(l)} ) )
\end{equation*} where $\sigma$ is the non-linearity and $\bm{g}_{\mathbf{\Gamma}^{(l)}, \mathbf{\Gamma_D}^{(l)}}^{(l)}$ is the convolutional kernel characterized by $\mathbf{\Gamma}^{(l)}, \mathbf{\Gamma_D}^{(l)}$.
\label{prop:GAT_newconv}
\end{proposition}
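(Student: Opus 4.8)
The plan is to unwind the definitions: the goal is to recognize the attention matrix of a single GAT layer as the random-walk transition matrix of an auxiliary weighted graph built from the (symmetrized) attention scores, and then to identify that transition matrix with the degree-one truncation of the $\newconv$-convolution of \eqref{eq:newconv}. I would do this in three steps.

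First I would peel off the softmax. For an edge $(i,j)$, write $e_{ij} := \exp\!\big(\text{LeakyReLU}(\bm{a}^\top[\mW^{(l)}\vh_i \,||\, \mW^{(l)}\vh_j])\big)$ for the unnormalized attention score in \eqref{eq:attention}, where $\vh_i$ is the $i$-th row of $\mH^{(l)}$. Under the standing symmetry assumption on $\bm{a}$, i.e. $\bm{a} = (\bm{a}_1, \bm{a}_1)$, the argument of the nonlinearity equals $\bm{a}_1^\top \mW^{(l)}(\vh_i + \vh_j)$, which is symmetric in $i$ and $j$, so $e_{ij} = e_{ji}$. Collecting these scores into $\mathbf{\Gamma}^{(l)} \in \R^{N \times N}$ with $\mathbf{\Gamma}^{(l)}(i,j) = e_{ij}$ when $i$ and $j$ are connected (including $i = j$, since $i \in \mathcal{N}_i$ in the GAT formulation) and $0$ otherwise, we obtain a symmetric and nonnegative weighted adjacency matrix supported on the edge set of $\bm{G}$; let $\mathbf{\Gamma_{\bmD}}^{(l)} := \diag(\mathbf{\Gamma}^{(l)}\vone)$ be its degree matrix. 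Since $\alpha_{ij} = e_{ij}/\sum_{k \in \mathcal{N}_i} e_{ik}$, the matrix of attention coefficients is exactly $\mathbf{\Gamma_{\bmD}}^{(l)^{-1}}\mathbf{\Gamma}^{(l)}$, so \eqref{eq:GAT} in matrix form reads $\mH^{(l+1)} = \sigma\big(\mathbf{\Gamma_{\bmD}}^{(l)^{-1}}\mathbf{\Gamma}^{(l)}\mH^{(l)}\mW^{(l)}\big)$. This gives the left-hand equality in the Proposition and, in particular, shows that each layer induces a layer-dependent adjacency matrix $\mathbf{\Gamma}^{(l)}$ with degree matrix $\mathbf{\Gamma_{\bmD}}^{(l)}$.

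Next I would identify this operator spectrally and match it to a first-order convolution. Applying the identity $\Lrw = \bmD^{-1}\bm{L} = \bm{I} - \bmD^{-1}\bmA$ from the text to the weighted graph $\mathbf{\Gamma}^{(l)}$ (whose random-walk normalized Laplacian I call $\Lrw^{(l)}$) yields $\mathbf{\Gamma_{\bmD}}^{(l)^{-1}}\mathbf{\Gamma}^{(l)} = \bm{I} - \Lrw^{(l)}$. It then remains to see that $\bm{I} - \Lrw^{(l)}$ is the first-order $\newconv$-convolution attached to $\mathbf{\Gamma}^{(l)}$: instantiating \eqref{eq:newconv} for this graph (so that $\bmD$, $\bm{U}$, $\bm{\Lambda}$ there denote $\mathbf{\Gamma_{\bmD}}^{(l)}$ and the eigendecomposition of its symmetric normalized Laplacian) and using $\Lrw^{(l)} = \mathbf{\Gamma_{\bmD}}^{(l)^{-1/2}}\bm{U}\bm{\Lambda}\bm{U}^\top\mathbf{\Gamma_{\bmD}}^{(l)^{1/2}}$, the right-hand side of \eqref{eq:newconv} is a function of $\Lrw^{(l)}$ applied to $\bm{x}$. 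Following the degree-one Chebyshev truncation and renormalization argument of \citep{kipf2016semi}, but carried out with the random-walk Laplacian in place of the symmetric one, a kernel $\bm{g}(\lambda) = \theta(1 - \lambda)$ collapses this to $\theta(\bm{I} - \Lrw^{(l)})\bm{x}$; absorbing the per-channel scalars $\theta$ into the learnable matrix $\mW^{(l)}$ (exactly as $\theta$ is absorbed in the GCN derivation) turns $\sigma\big(\bm{g}^{(l)}_{\mathbf{\Gamma}^{(l)}, \mathbf{\Gamma_{\bmD}}^{(l)}} \newconv \mH^{(l)}\big)$ into $\sigma\big(\mathbf{\Gamma_{\bmD}}^{(l)^{-1}}\mathbf{\Gamma}^{(l)}\mH^{(l)}\mW^{(l)}\big)$, which is the approximate equality claimed.

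The first step and the algebra in the second are routine bookkeeping. The part that requires care is the last one: one must pin down precisely which first-order truncation is intended and verify that the normalization factors in \eqref{eq:newconv} line up so that the transition matrix $\bm{I} - \Lrw^{(l)}$ (rather than, say, a two-sided normalization $\mathbf{\Gamma_{\bmD}}^{(l)^{-1}}\mathbf{\Gamma}^{(l)}\mathbf{\Gamma_{\bmD}}^{(l)^{-1}}$) is what actually emerges, together with the remark that the self-loops already present in $\mathbf{\Gamma}^{(l)}$ serve the purpose of the renormalization trick of \citep{kipf2016semi}. Once that correspondence is spelled out, the analogy with a single GCN layer is exact and the Proposition follows.
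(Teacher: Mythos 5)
Your proposal is correct and follows essentially the same route as the paper's own argument: identify the (symmetrized, pre-softmax) attention scores as a layer-dependent weighted adjacency matrix $\mathbf{\Gamma}^{(l)}$ so that the GAT update becomes $\sigma(\mathbf{\Gamma_{\bmD}}^{(l)^{-1}}\mathbf{\Gamma}^{(l)}\bm{H}^{(l)}\bm{W}^{(l)}) = \sigma((\bm{I}-\Lrw^{(l)})\bm{H}^{(l)}\bm{W}^{(l)})$, then recover this as the degree-one Chebyshev truncation of the $\newconv$ convolution in the style of \citep{kipf2016semi}. Your direct handling of the softmax and symmetry of $\bm{a}$ is tidier than the paper's block-matrix bookkeeping with $\widehat{\bm{Q}}$ and $\widehat{\bm{A}}$, and your caveat about the one-sided versus two-sided normalization is well taken, since the convolution must be instantiated with $\mathbf{\Gamma_{\bmD}}^{(l)^{1/2}}$ on the right (as in the appendix) for the transition matrix to emerge.
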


Proposition \ref{prop:GAT_newconv} establishes a direct equivalence between the graph convolution and the GAT model. This interpretation also shows that the GAT model applies Laplacian smoothing to node based features \citep{taubin1995signal, li2018deeper}. Such a connection between spectral operations and attention based graph neural networks provides directions for theoretical analysis of attention GNNs. 

\subsection{GAT model is equivalent to layer-wise convolution}

Consider a single layer of the GAT model. Let $\bm{H} \in \R^{N\times D}$ be the input feature matrix to a single GAT layer, let $\bm{W}\in \R^{D\times F}$ denote the weight matrix, let $\bm{C}\in \R^{D\times 2}$ be such that $\bm{C} = [ \bm{a}(1:N) \ \bm{a}(N+1:2N) ]$, where $\bm{a} \in \R^{2N}$ denotes the attention coefficient vector as defines in \citep{velickovic2018graph}. Let $\bm{A} \in \R^{N \times N}$ be the graph adjacency matrix and let $\widetilde{\bm{A}}=\bm{A}+\bm{I_N}$. For a given graph, if $\bm{D}$ represents the degree matrix, then  $\bm{D}^{-1}\bm{A}$ is simply the state transition matrix of a random walker on the graph. 

We can further define a matrix $\bm{Q} \in \R^{N \times 2}$ as 
\begin{equation}
    \bm{Q} = \bm{HWC}.
\end{equation}
Further, let us define $e_{ij}$ (as in \citep{velickovic2018graph}) as 
\begin{equation}
    e_{ij} = a^\top [\bm{Wh_i} ||\bm{ Wh_j}]
\end{equation} where $h_i$ is the $i$th row of $\bm{H}$. Then, we can see that the vector $\bm{e}_i$ where $e_i(j) = e_{ij}$ is given as 

\begin{equation}
    \bm{e}_i = \begin{bmatrix}
    \bm{\wa}(:,i) & \text{diag}(\bm{\wa} (:,i))
    \end{bmatrix} \begin{bmatrix} \bm{Q}(i,:) & \mathbf{0}_{1\times 2} \\
    \mathbf{0}_{N\times2} & \bm{Q} \end{bmatrix} \begin{bmatrix} 1 \\ 0 \\ 0 \\ 1 \end{bmatrix}.
\end{equation}
Hence, we can express the matrix of attention coefficients, before the softmax operation, $\Gamma$ as 

\begin{equation}
    \bm{\Gamma} = f( \widehat{\bm{Q}}^\top \widehat{\bm{A}} )
\end{equation} where 

\begin{equation}
\resizebox{.9 \textwidth}{!} 
{
$   \bm{ \widehat{Q}}  = 
    \begin{bmatrix}
    \bm{Q}(1,:) &  &  &  &  & & \\
     & \bm{Q}(2,:) &  &  &   & &\\
     & & \ddots &  &   &   \\
     & &  & \bm{Q}(N,:) &   &\\
     &  &  & & \bm{ Q }&   &  \\
     &  &  &  & & \ddots &  \\
     &  & & &  & & \bm{Q}\\
    \end{bmatrix} 
    \begin{bmatrix}
    1 & 0 & \cdots \\
    0 & 0 & \cdots \\
     0 & 1 & \cdots \\
    \mathbf{0}_{N-1} & \mathbf{0}_{N-1} & \cdots \\
    0 & 0 & \cdots \\
    1 & 0 & \cdots \\
    0 & 0 & \cdots \\
    0 & 1 & \cdots \\
    \vdots & \vdots & \cdots
    \end{bmatrix} 
    , \ \widehat{\bm{A}}  = 
    \begin{bmatrix} 
    \bm{\wa} \\
    \text{diag}( \bm{\wa} (:,1)) \\
    \vdots \\
    \text{diag}( \bm{\wa} (:,N)) \\
    \end{bmatrix}$
}
\end{equation} 

and $f = \exp (\text{LeakyReLU}(\cdot))$. Further, let $\bm{\Gamma_D}=\mathrm{diag}(\bm{\Gamma} \mathbf{1}_N)$.

The GAT layer update can be expressed as, 
\begin{align}
    \widehat{\bm{H}}&= \sigma \left ( \text{diag} \left ( f( \widehat{\bm{Q}}^\top \widehat{\bm{A}} ) \mathbf{1}_N \right )^{-1}  f( \widehat{Q}^\top \widehat{\bm{A}} ) \bm{H} \bm{W}\right ) \label{eq:new_int_compare}\\
        \widehat{\bm{H}} &= \sigma \left ( \bm{\Gamma_D}^{-1} \bm{\Gamma} \bm{H} \bm{W}\right ) \label{eq:new_int}
\end{align}

Given a new graph $\bm{G}_{\Gamma}(\mathcal{E}, \mathcal{V})$ with $\bm{ \Gamma}$ being the adjacency matrix. Let $\bm{ \Gamma}_D$ be the corresponding degree matrix. Then, the random walk normalized Laplacian is defined as 
\begin{equation}
   \bm{ L_{\mathrm{rw}}} = \bm{I} - \bm{\Gamma_D}^{-1} \Gamma
\end{equation}
Note that although $\bm{\Lrw}$ is asymmetric, it is similar to the symmetric normalized Laplacian matrix:
\begin{equation}
    \bm{\Lsym} = \bm{I}-\bm{\Gamma_D}^{-1/2}\bm{\Gamma} \bm{\Gamma_D}^{-1/2} = \bm{\Gamma}^{1/2}\bm{\Lrw} \bm{\Gamma}^{-1/2}.
    \label{eq:Lsym_def}
\end{equation}Hence, $\bm{ \Lrw}$ has real eigenvalues and match with those of $\bm{ \Lsym}$. The corresponding eigenvectors of the two matrices are also related: If $\bm{v}$ is an eigenvector of $\bm{\Lsym}$, then $\bm{\Gamma_D}^{-1/2}v$ is an eigenvector of $\bm{\Lrw}$. Using this, we can define a new convolution operator as 
\begin{equation}
    \bm{g_{\theta}}\newconv \bm{h} = \bm{\Gamma_D}^{-1/2} \bm{U_{\mathrm{sym}}} \bm{g_\theta}(\bm{\Lambda_{\mathrm{sym}}}) \bm{U_{\mathrm{sym}}}^T \bm{\Gamma_D}^{1/2} \bm{h}
\end{equation}

Then, using the Chebychev polynomial approximation similar to \citep{kipf2016semi}, we can show that for a given feature vector $\bm{h}$, we can get a first order approximation to the operation $\bm{g_{\theta}} \newconv \bm{h}$ as 
\begin{equation}
    \bm{g_{\theta}} \newconv \bm{h} \approx \bm{\Gamma_D}^{-1} \bm{\Gamma} \bm{h} \ w.
\end{equation} For multiple output features, the new graph convolution operation has a first order approximation as 
\begin{equation}
    \bm{g_{\theta}}\newconv \bm{H} \approx \bm{\Gamma_D}^{-1} \bm{\Gamma} \bm{H} \ \bm{W}
\end{equation} which matches exactly with \eqref{eq:new_int}. This shows that the model defined in \citep{velickovic2018graph} is similar to a GCN model, but defined layer-wise.

\subsection{Spectral sparsification preserves graphs convolutional features}
\label{subsec: proofs}
In this section, we show that the features learnt by graph convolution based neural networks are preserved when spectral sparsification techniques are applied to the original data graph. We use the following notation: $\bm{\Lsym}$ is as defined in \eqref{eq:Lsym_def} and denotes the symmetric normalized Laplacian matrix of a graph, $\bm{\Lsyms}$ denotes the symmetric normalized Laplacian matrix of the corresponding spectrally sparsified graph with a parameter of $\epsilon$. Similalry, we use $\Lrw$ to denote the random walk normalized Laplacian matrix of a graph and $\Lrws$ to denote the corresponding  random walk normalized Laplacian matrix of the spectrally sparsified graph. 

\noindent \textbf{ Spectral sparsification and the GCN model}
Consider the graph convolution network architecture proposed in \cite{kipf2016semi}. We assume that the non-linearity $\sigma(\cdot)$ used in Lipshitz continuous with a Lipschitz constant $\ell_\sigma$. For a single neural network layer, let the input features be $\bm{H} \in \R^{N \times D}$, let the weight matrix be $\bm{W} \in \R^{D \times F}$, where $F$ is the number of output features. Then, the new set of features computed by the GCN model is 
\begin{align}
    \bm{\widehat{H}} = \sigma \left( (\Lsym - \bm{I}) \bm{H W}\right )
\end{align} and the corresponding set of features computed by the GCN model using a spectrally sparsified graph are given as 
\begin{align}
    \bm{\widehat{H}'} = \sigma \left( (\Lsyms -\bm{I}) \bm{H W}\right )
\end{align}

\textbf{Proof of Theorem \ref{thm:GCN}}

\begin{proof}
We first characterize the spectral norm error between the corresponding graph Laplacians $\Lsym$ and $\Lsyms$ and then use the bound to prove Theorem \ref{thm:GCN}. We use $\bmD$ and $\bmDs$ to denote the degree matrices of the full and the spectrally sparsified graphs. 

Since both $\Lsym$ and $\Lsyms$ are symmetric and positive semidefinite, we have, 
\begin{align}
    \norm{\Lsym - \Lsyms} & = \max  ( |\lambda_{\max} (\Lsym - \Lsyms)|, |\lambda_{\min}(\Lsym - \Lsyms)|  ) \nonumber \\
     & = \underset{\bm{x \in \R^{N} \colon \norm{\bm{x}}=1}}{\sup}{|\bmx^\top (\Lsym - \Lsyms)\bmx|} \nonumber
\end{align}
We then have 
\begin{align}
    |\bmx^\top (\Lsym - \Lsyms)\bmx| & = | \bmx^\top \Lsym \bmx - \bmx^\top \bmD^{-1/2}\bmDs^{1/2}\Lsyms \bmDs^{1/2} \bmD^{-1/2}\bmx   \\
      & \ \ + \bmx^\top \bmD^{-1/2}\bmDs^{1/2}\Lsyms \bmDs^{1/2} \bmD^{-1/2}\bmx - \bmx^\top\Lsyms\bmx \nonumber| \nonumber \\
    & \leq | \bmx^\top \Lsym \bmx - \bmx^\top \bmD^{-1/2}\bmDs^{1/2}\Lsyms \bmDs^{1/2} \bmD^{-1/2}\bmx | \nonumber \\
    & \ \ + |\bmx^\top \bmD^{-1/2}\bmDs^{-1/2}\Lsyms \bmD^{-1/2}\bmDs^{-1/2} \bmx  - \bmx^\top\Lsyms\bmx | \nonumber \\
\end{align}
Taking supremum on both sides, we get 
\begin{align}
    \underset{\bm{x \in \R^{N} \colon \norm{\bm{x}}=1}}{\sup}{|\bmx^\top (\Lsym - \Lsyms)\bmx|} & \leq \epsilon  \underset{\bm{x \in \R^{N} \colon \norm{\bm{x}}=1}}{\sup} |\bmx^\top \Lsym \bmx | + \nonumber \\ & \ \ \norm{\bmD^{-1/2}\bmDs^{-1/2}\Lsyms \bmD^{-1/2}\bmDs^{-1/2} - \Lsyms} \nonumber \\
    & \leq \epsilon \norm{\Lsym} + \\ & \ \ \norm{\bmD^{-1/2}\bmDs^{-1/2}\Lsyms \bmD^{-1/2}\bmDs^{-1/2} - \Lsyms} \nonumber \\
    & \leq \epsilon \norm{\Lsym} + 3\epsilon \norm{\Lsym}\nonumber \\
    & \leq 4\epsilon \norm{\Lsym}
\end{align}
where we assume that $3\epsilon^2 + \epsilon^3 < \epsilon$, which holds true for small $\epsilon$.

We then have the final result as below. Let $\ell_\sigma$ be the Lipschitz constant of the non-linearity $\sigma$.

\begin{align}
    \norm{\bm{\widehat{H}} - \bm{\widehat{H}'}}_F & \leq \ell_\sigma 4 \epsilon \norm{(\Lsym - \Lsyms) \bm{H}\bm{W}}_F  \\
    & 4\epsilon \norm{\Lsym - \Lsyms}\norm{\bm{H}\bm{W}}_F
\end{align} where, we use $\ell_\sigma=1$ for ReLU or ELU non-linearity, and use inequality 
\[ \norm{\bm{AB}}_F \leq \norm{\bm{A}} \norm{\bm{B}}_F \] for any two matrices $\bm{A}$ and $\bm{B}$.

\end{proof}

Theorem \ref{thm:GCN} shows that if two GCN models that use the full and spectrally sparsified graphs have the same initialization $\bm{W}$, then the correponding feature updates are close in a Frobenius norm sense. Although we have not explored the dynamics or training, we strongly believe that similar bounds can be obtained on the gradients of the network parameters and in turn on the gradient descent updates.

\noindent \textbf{Spectral sparsification and the GAT model}
We can now consider the graph attention network model proposed in \citep{velickovic2018graph}. With $\bm{H}$ and $\bm{W}$ as defined in the previous section, the feature update equations for the GAT model using the full and the spectrally sparsified graphs are given as 
\begin{align}
    \bm{\widehat{H}} = \sigma ( (\bm{\Lrw - I})HW )  \nonumber \\
    \bm{\widehat{H}'} = \sigma( (\bm{\Lrws - I}) HW )
    \label{eq:GAT_layer_update}
\end{align}
Due to Equations \eqref{eq:Lsym_def}, we can rewrite the above equations as 
\begin{align}
    \bm{\widehat{H}} = \sigma ( \bmD^{-1/2}(\bm{\Lsym - I})\bmD^{1/2} HW ) \label{eq:full_gat} \\
    \bm{\widehat{H}'} = \sigma( \bmDs^{-1/2}(\bm{\Lsyms - I})\bmDs^{-1/2} HW ) \label{eq:GAT_Lsym}
\end{align}

As before, we first bound the error $\norm{\Lrw - \Lrws}$ and then use it to bound the error $\bm{\widehat{H}} - \bm{\widehat{H}'}$. 

\textbf{Proof of Theorem \ref{thm:GAT}}

Theorem \ref{thm:GAT} shows that if a layer-wise spectral sparsification of the graph is used to reduce the number of edges, then the feature updates computed by the sparse model are also preserved. Note that this requires sparsifying the graph in each layer separately with the weights in the adjacency matrix given by $\Gamma$. In the next section, we show that this expensive procedure of layer-wise sparsification can be repalced by a one-time spectal sparsification procedure for the binary node classification problem. 

We use the following Lemma to establish Theorem \ref{thm:GAT}: 
\begin{lemma}
Let $\bm{A} \in R^{N \times N}$ be any matrix and $\bm{D} \in \R^{N \times N}$ be a diagonal matrix with positive diagonal entries. Then, we have 
\begin{equation}
    \norm{\bm{A} - \bm{D}^{-1}\bm{A}\bm{D}} \leq \norm{\bm{A}} ( \norm{\bm{I} - \bm{D}^{-1}} + \norm{\bm{D}^{-1}}\norm{\bm{I} - \bm{D}} )
\end{equation}
\label{lm:sim_diff}
\end{lemma}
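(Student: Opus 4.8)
\textbf{Proof plan for Lemma \ref{lm:sim_diff}.}
The plan is to write the matrix difference $\bm{A} - \bm{D}^{-1}\bm{A}\bm{D}$ as a telescoping sum and then apply the triangle inequality together with submultiplicativity of the spectral norm. Concretely, I would insert the intermediate term $\bm{D}^{-1}\bm{A}$:
\begin{align}
    \bm{A} - \bm{D}^{-1}\bm{A}\bm{D} &= \left(\bm{A} - \bm{D}^{-1}\bm{A}\right) + \left(\bm{D}^{-1}\bm{A} - \bm{D}^{-1}\bm{A}\bm{D}\right) \nonumber \\
    &= \left(\bm{I} - \bm{D}^{-1}\right)\bm{A} + \bm{D}^{-1}\bm{A}\left(\bm{I} - \bm{D}\right). \nonumber
\end{align}
This identity holds for any square $\bm{A}$ and any invertible diagonal $\bm{D}$; positivity of the diagonal entries is only used to guarantee that $\bm{D}^{-1}$ exists (and that the quantities $\norm{\bm{I}-\bm{D}^{-1}}$, $\norm{\bm{D}^{-1}}$ are well defined and finite).

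Next I would take the spectral norm of both sides and use the triangle inequality, then bound each summand by the product of operator norms, using $\norm{\bm{X}\bm{Y}} \le \norm{\bm{X}}\,\norm{\bm{Y}}$:
\begin{align}
    \norm{\bm{A} - \bm{D}^{-1}\bm{A}\bm{D}} &\leq \norm{\left(\bm{I} - \bm{D}^{-1}\right)\bm{A}} + \norm{\bm{D}^{-1}\bm{A}\left(\bm{I} - \bm{D}\right)} \nonumber \\
    &\leq \norm{\bm{I} - \bm{D}^{-1}}\norm{\bm{A}} + \norm{\bm{D}^{-1}}\norm{\bm{A}}\norm{\bm{I} - \bm{D}} \nonumber \\
    &= \norm{\bm{A}}\left( \norm{\bm{I} - \bm{D}^{-1}} + \norm{\bm{D}^{-1}}\norm{\bm{I} - \bm{D}} \right), \nonumber
\end{align}
which is exactly the claimed bound.

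There is essentially no substantive obstacle here: the only "choice" is which intermediate term to telescope through, and $\bm{D}^{-1}\bm{A}$ is the natural one (telescoping through $\bm{A}\bm{D}$ instead would produce a slightly different but equally valid bound). Everything else is the triangle inequality and submultiplicativity of the induced $\ell_2$ operator norm. The one mild point worth stating explicitly in the writeup is that $\norm{\cdot}$ here denotes the spectral norm, so submultiplicativity applies on the nose; no symmetry or positive-semidefiniteness of $\bm{A}$ is needed, which is what makes the lemma applicable to the asymmetric factors $\bmD^{-1/2}\bmDs^{-1/2}$ that appear when bounding $\norm{\bm{\widehat{H}} - \bm{\widehat{H}'}}_F$ in the proof of Theorem \ref{thm:GAT}.
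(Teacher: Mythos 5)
Your proposal is correct and follows essentially the same route as the paper: telescoping through the intermediate term $\bm{D}^{-1}\bm{A}$, then applying the triangle inequality and submultiplicativity of the spectral norm. (In fact you write the first factorization correctly as $(\bm{I}-\bm{D}^{-1})\bm{A}$, whereas the paper's displayed step has the harmless slip $\bm{A}(\bm{I}-\bm{D}^{-1})$; the resulting norm bound is identical either way.)
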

\begin{proof}
We have 
\begin{align}
    \norm{\bmA - \bmD^{-1}\bmA\bmD} & = \norm{\bmA - \bmD^{-1}\bmA + \bmD^{-1}\bmA - \bmD^{-1}\bmA\bmD} \nonumber \\
    & = \norm{\bmA(\bm{I}-\bmD^{-1}) + \bmD^{-1}\bmA (\bm{I} - \bmD)} \nonumber \\
    & \leq \norm{\bmA} \norm{\bm{I}-\bmD^{-1}} + \norm{\bm{I} - \bmD)}\norm{\bmD^{-1}} \norm{\bmA} \nonumber \\
    & \leq  \norm{\bm{A}} \left ( \norm{\bm{I} - \bm{D}^{-1}} + \norm{\bm{D}^{-1}}\norm{\bm{I} - \bm{D}} \right )
\end{align}
\end{proof}

\noindent \textbf{Proof of Theorem \ref{thm:GAT}:}
\begin{align*}
    \norm{\Lrw - \Lrws} & = \norm{\bmD^{-1/2}\Lsym \bmD^{1/2} - \bmDs^{-1/2}\Lsyms \bmDs^{1/2}} \\
    & =  \| \bmD^{-1/2}\Lsym \bmD^{1/2} -\bmD^{-1/2}\Lsyms \bmD^{1/2} \ + \\ & \hphantom{{}=\bmD^{-1/2}}  \bmD^{-1/2}\Lsyms \bmD^{1/2} - \bmDs^{-1/2}\Lsyms \bmDs^{1/2}\| \\
    & \leq \norm{\Lsym - \Lsyms} + \norm{\Lsyms - \bmD^{1/2}\bmDs^{-1/2}\Lsyms \bmDs^{1/2}\bmD^{-1/2}}
\end{align*}
Then, using Lemma \ref{lm:sim_diff}, we get
\begin{align*}
    \norm{\Lrw - \Lrws} & \leq  \norm{\Lsym - \Lsyms} + \epsilon(1+\epsilon)\norm{\Lsyms} \\
    & \leq (5\epsilon + \epsilon^2) \norm{\Lsym}\\
    &\leq 6\epsilon \norm{\Lsym}
\end{align*}

Further, from the feature update equations \eqref{eq:GAT_layer_update} and since $\sigma$ is Lipschitz continuous with Lipschitz constant $\ell_\sigma$, we have the final result as in the proof of Theorem \ref{thm:GCN}.

\subsection{Approximation of weight matrices}
\label{subsec:approx_weight}
Theorems \ref{thm:GCN} and \ref{thm:GAT} provide an upper bound on the feature updates obtained using the full and sparsified graphs under both GCN and GAT. A stronger notion of information preservation after sparsification is obtained by studying the weight matrices to see if the graph structure is retained after the sparsification. To this end, we would like to study the error
$\norm{\bm{W} - \bm{W_s}}$, 
where $\bm{W}$ and $\bm{W_s}$ are weight matrices of the neural network, learned using the full and the sparsified graph respectively in any given layer. Such a result shows whether the graph structure retained is sufficient for the GAT model to learn strong features. 

\begin{figure}[h]
    \centering
    \includegraphics[scale=0.25]{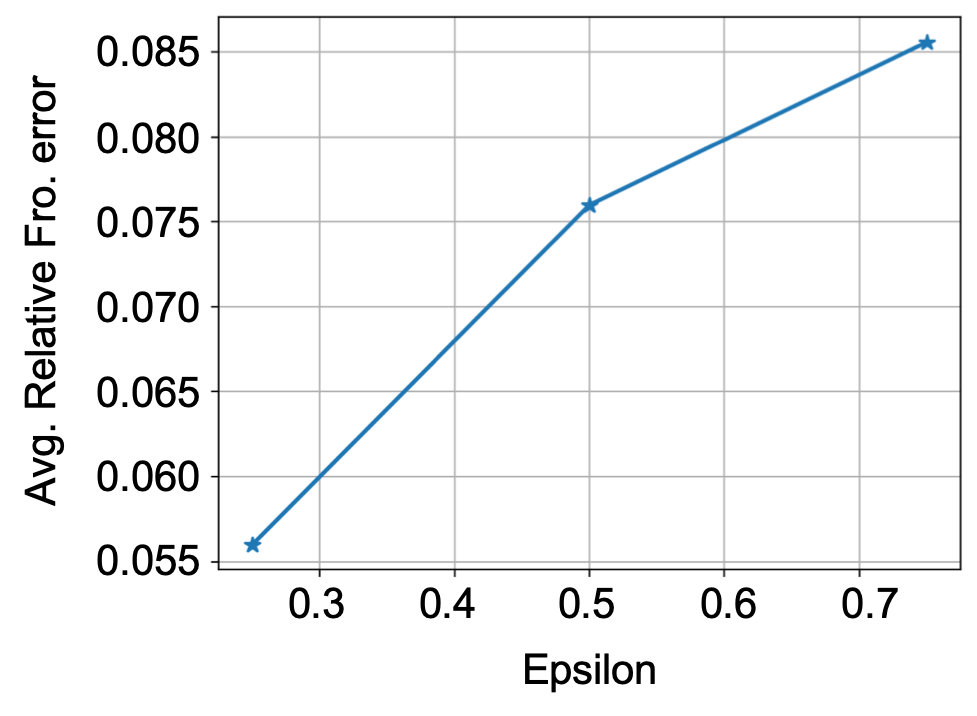}
    \includegraphics[scale=0.25]{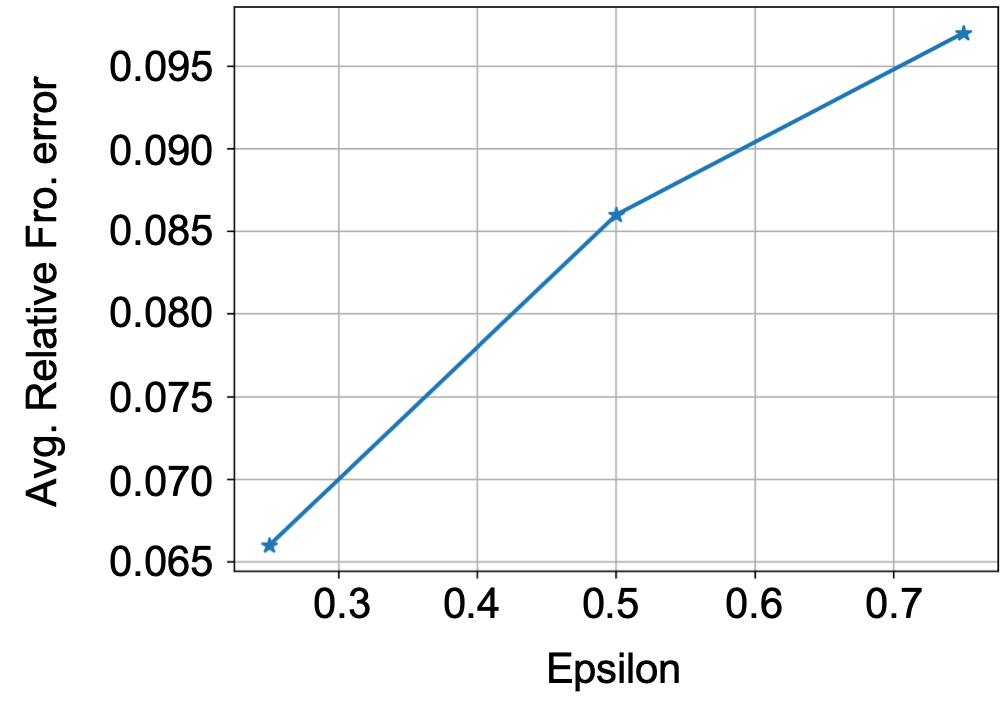}
    \includegraphics[scale=0.25]{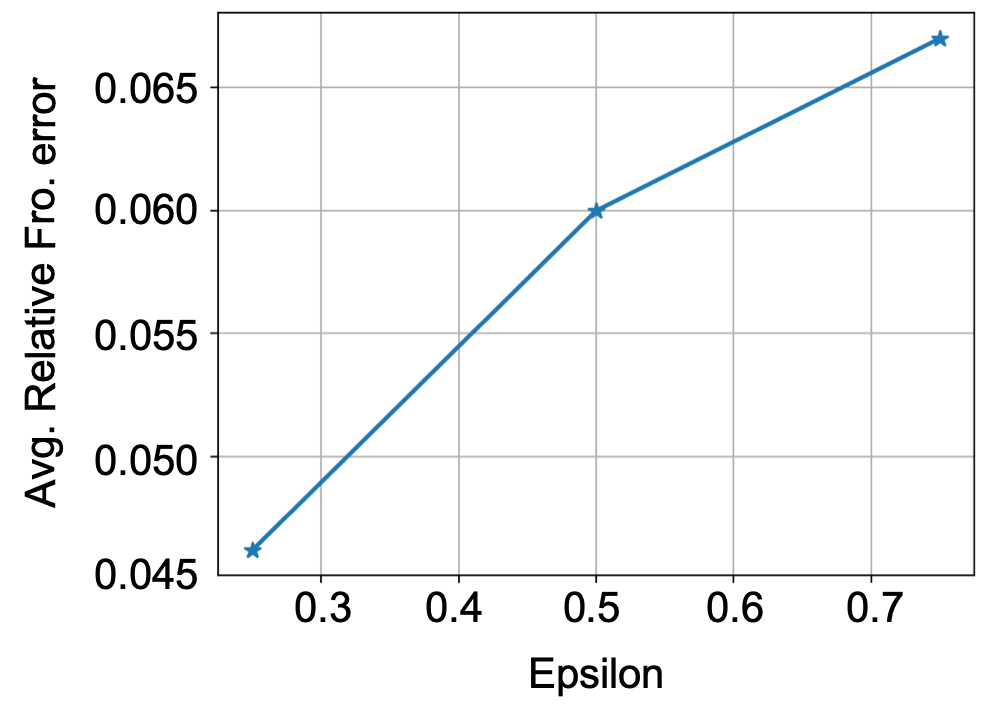}
    \includegraphics[scale=0.25]{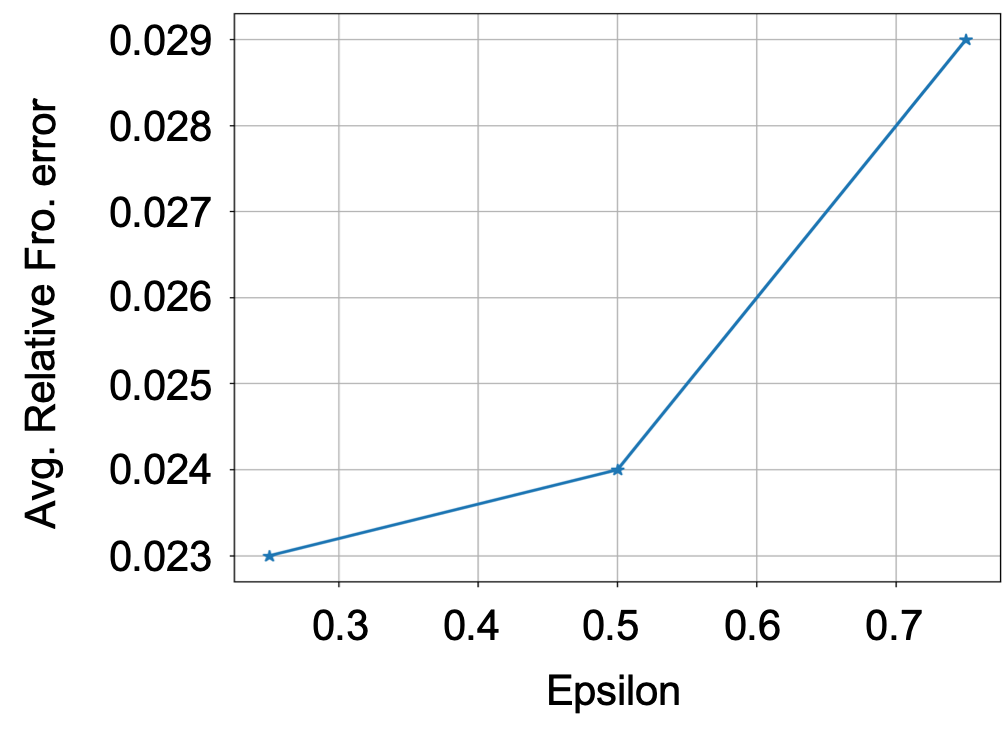}
    \includegraphics[scale=0.25]{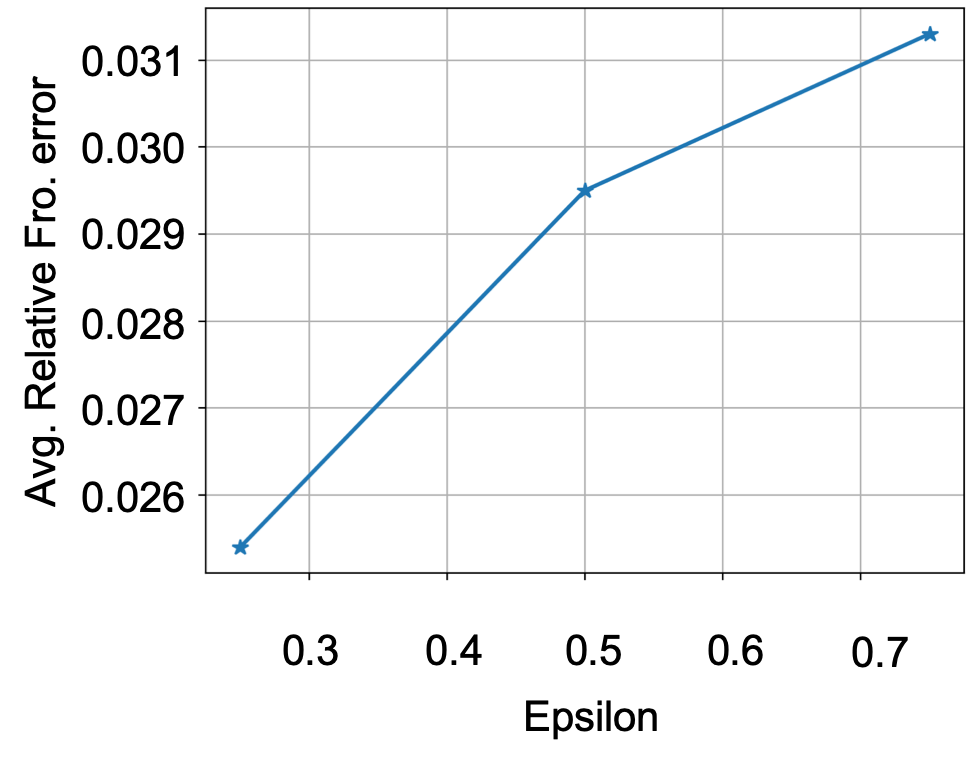}
    \caption{Relative Error between the learned weights without and with sparsification on Coautho-Phy, Github Social and Couathor-CS (top) and Amazon-Computer and Amazon-Photos (bottom) datasets.  We can see that the error is proportional to the $\epsilon$ parameter. Such a comparison was not possible on the Reddit dataset, since the model cannot be run on the full graph.}
    \label{fig:W_comp}
\end{figure}

To lend support to this claim, we  studied the difference between the weight matrices learned with and without spectral sparsification. We used three different datasets (Coautho-Phy, Github Social and Couathor-CS). In each case, we used three different values of $\epsilon$ ($0.25, \ 0.5, \ 0.75$). At each parameter setting, we performed 5 independent trials and averaged the relative Frobenius errors between the weight matrices and the attention function $a$ of all attention heads. We report the results in Fig.~\ref{fig:W_comp}. It is clear that the error between the learned matrices is proportional to the value of $\epsilon$ itself. This shows that the training process is highly stable with respect to spectral sparsification of the input graph.

\section{Details of Experiments}\label{sec:appendex-detail}

\subsection{Description of  datasets}
\label{subsec:data}

\textbf{Transductive learning tasks:} Amazon Computers and Amazon Photo are segments of the Amazon co-purchase graph~\citep{10.1145/2783258.2783381}, where nodes represent goods, edges indicate that two goods are frequently bought together, node features are bag-of-words encoded product reviews, and class labels are given by the product category. Coauthor CS and Coauthor Physics are co-authorship graphs based on the Microsoft Academic Graph from the KDD Cup 2016 challenge 3. Here, nodes are authors, that are connected by an edge if they co-authored a paper; node features represent paper keywords for each author’s papers, and class labels indicate most active fields of study for each author. For the Reddit dataset, we predict which community different Reddit posts belong to based on the interactions between the posts. The Github social dataset consists of Github users as nodes and the task is that of classifying the users as web or machine learning developers (binary classification). For all the above datasets, the task is that of node classification. Additionally, we have experiments on citation graphs: Cora, Citeseer and Pubmed. In these datasets, the nodes represent authors, edges represent mutual citations and the task is to categorize the authors into their fields of study. 

\textbf{Inductive learning tasks:} We use the Protein=Protein interaction dataset \cite{zitnik2017predicting} where the graphs correspond to different human tissues. The dataset contains
20 graphs for training, 2 for validation and 2 for testing. Critically, testing graphs remain completely unobserved during training. To construct the graphs, we used the preprocessed data provided
by \cite{hamilton2017inductive}. The average number of nodes per graph is 2372. Each node has 50
features that are composed of positional gene sets, motif gene sets and immunological signatures.
There are 121 labels for each node set from gene ontology, collected from the Molecular Signatures
Database (Subramanian et al., 2005), and a node can possess several labels simultaneously.

\noindent\textbf{Evaluation setup}. 
For the Reddit dataset, we use training, validation and test data split of 65\%, 10\% and 25\%, as specified in the DGLGraph library. For the other datasets, the split is 10\%, 20\% and 70\%. The same split is for evaluating the original GAT model. For training and evaluation, we closely follow the setup used in \citep{velickovic2018graph}. We first use the spectral sparsification algorithm to obtain a sparse graph and then use a two-layer GAT model for training and inference. The first layer consists of $K = 8$ attention heads, computing 8 output features each, after which we apply the exponential linear unit (ELU). The second layer consists of a single attention head that computes C features (where C is the number of classes), followed by a softmax activation. We use the same architecture while comparing with the GAT and the sparseGAT models.  We train all the models using a transductive approach wherein we use the features of all the nodes to learn the node embeddings. For the inductive learning task, we follow the evaluation method used in \cite{velickovic2018graph}. We apply a three-layer GAT model. The first two layers consist of K = 4 attention heads computing 256 features (for a total of 1024
features), followed by an ELU nonlinearity. The final layer is used for (multi-label) classification:
K = 6 attention heads computing 121 features each, that are averaged and followed by a logistic
sigmoid activation. 

\noindent\textbf{Implementation details}.
For each dataset, we compute the effective resistances of the edges using the Laplacians library written in Julia by Spielman \citep{Lap}. The rest of the algorithm is implemented in PyTorch. We use the code for the GAT provided  in \citep{velickovic2018graph}. We train our models on an Ubuntu 16.04 with 128GB memory and a Tesla P100 GPU (with 16GB memory).We use Adam optimizer with a learning rate of 0.001. We use the hyperparameters recommended in \citep{velickovic2018graph} for all of our experiments that use the GAT model. For FastGCN, we use the baseline parameters recommended in \citep{chen2018fastgcn}. 

\noindent\textbf{Computing effective resistances}.
For all datasets, computing the effective resistances is a one-time pre-processing task. We use the algorithm proposed in \citep{spielman11graph}, which takes about $ O(M \log N)$ time to compute the effective resistances of all the edges in the graph. We compute the resistance values and store them as metadata. While performing training and inference, we load the resistance values and then sample from the distribution described in Section \ref{subsec:Reff}.

% \section{More Experimental Results}
% \subsection{Comparison to baselines - other datasets}

% Comparison to baselines for all the datasets not included in the main section is shown in Figure \ref{fig:baselines_appendix}.

% \begin{figure}[h!]
%     \centering
%     \includegraphics[scale=0.12]{scatter_Photo.png}
%     \includegraphics[scale=0.12]{scatter_Phy.png}
%     \includegraphics[scale=0.12]{scatter_CS.png}
%     \caption{Classification performance of GAT models with sparsified graphs, as compared to random edge sampling and FastGCN on the Coauthor Physics and Amazon cobuy-photos and Coauthor CS datasets. Since random edge sampling may remove edges that are crucial to graph structure, there is a reduction in the classification performance. Similarly, GCN based models such as FastGCN do not make use of attention mechanism. FastGAT model is always as fast as FastGCN while also being always wither competitive with FastGCN or better. These datasets are the largest among all the datasets. Similar plot for all other datasets are included in the appendix.}
%     \label{fig:baselines_appendix}
% \end{figure}

\subsection{Experimental results on smaller datasets}
\label{subsec:small_data}

\begin{table}
\centering
\caption{Comparison of \mname with GAT~\citep{velickovic2018graph}.}
\label{tab:direct_comparison_small}
\resizebox{0.75 \textwidth}{!}{
\begin{tabular}{ll|c c c }
\toprule[0.6pt]
   Metric & Method & Cora & Citeseer & PubMed   \\
     \midrule
   \multirow{3}{*}{F1-micro}&  GAT & 0.72 \rpm 0.007 &0.685 \rpm 0.004 & 0.735 \rpm 0.003 \\
   ~ & \mname-0.5 & 0.713\rpm  0.003 & 0.685 \rpm 0.007 & 0.73 \rpm 0.004\\
  ~ &  \mname-0.9 & 0.63\rpm 0.02 & 0.65\rpm 0.002 & 0.722 \rpm 0.019\\
     \midrule
   \multirow{3}{*}{GPU Time (s)}&  GAT & 0.294 & 0.281 & 0.686  \\
    ~&\mname-0.5 & 0.22 & 0.236 & 0.54\\
    ~&\mname-0.9& 0.175 & 0.196 & 0.456  \\
    \midrule
     \multirow{3}{*}{CPU Time (s)}&GAT & 0.51 & 0.52 & 2.71\\
     ~& \mname-0.5 & 0.49& 0.51 & 2.01\\
    ~ & \mname-0.9 & 0.39 & 0.41&1.42\\
    \midrule
   \multirow{2}{*}{\% Edges redu.}&\mname-0.5 & 15.4\% & 10.7\% & 20\%  \\ 
   ~& \mname-0.9 & 48.7\% & 38\% & 50\%  \\
    \bottomrule[0.6pt]
\end{tabular}
}
\vskip -10pt
\label{tab:small_data_F1}
\end{table}

We report the experimental results on the smaller datasets Cora, Citeseer and Pubmed in Table \ref{tab:small_data_F1}. Since the number of edges are small compared the larger datasets, the adjacency matrices for these graphs are already considerably sparse. Hence, sparsification does not result in a large reduction in the number of edges. However, the trend is still similar to that was observed on large datasets, since the  accuracy performance does not drop, while training and inference time is lower than that for the model using the full graph.

        %  Accuracy, full & 79.5\% & 66.\%  & 74.4 \% & 91.25\%  \\
        %  \hline
        %  Time per epoch, full & 0.25s & 0.23s & 0.69s & \textbf{10.05s}  \\
        %  \hline
        %  Accuracy, sparsified & 80\% & 65\% & 75.8\% & 91.76\%  \\
        %  \hline
        %  Time per epoch, sparsified & 0.22s & 0.2s & 0.58s &  \textbf{0.95s} \\
        %  \hline 
        %  Num. edges, sparsified & 4631 & 4214 & 35567 & 32221\\

\subsection*{Q2. \mname has the same rate of learning as GAT models}
\label{subsec:rate}

Our next goal is to study if \mname needs more epochs to achieve the same level of accuracy as that of using full graphs. Figures~\ref{fig:rate}, ~\ref{fig:rate_appendix} show consistent per epoch learning rate for multiple datasets. The accuracy achieved while training with sparsified graphs matches well with that obtained using the full graph on all the datasets, showing that spectral sparsification does not affect learning in attention GNNs. 

\begin{figure}[h!]


\vskip -10pt
    \centering
    \includegraphics[width=0.4\textwidth]{lr_Phy.png}
    \includegraphics[width=0.4\textwidth]{lr_Comp.png}
    \caption{Accuracy Vs number of training epochs for both full and sparsified graphs. The accuracy attained matched almost exactly for every epoch even when a sparsified graph is used. For each dataset, the plots were computed by averaging over 5 independent trials. 
    }
    \label{fig:rate}
    \vskip -10pt
\end{figure}

\begin{figure}[h!]
    \centering
    \includegraphics[width=0.3\textwidth]{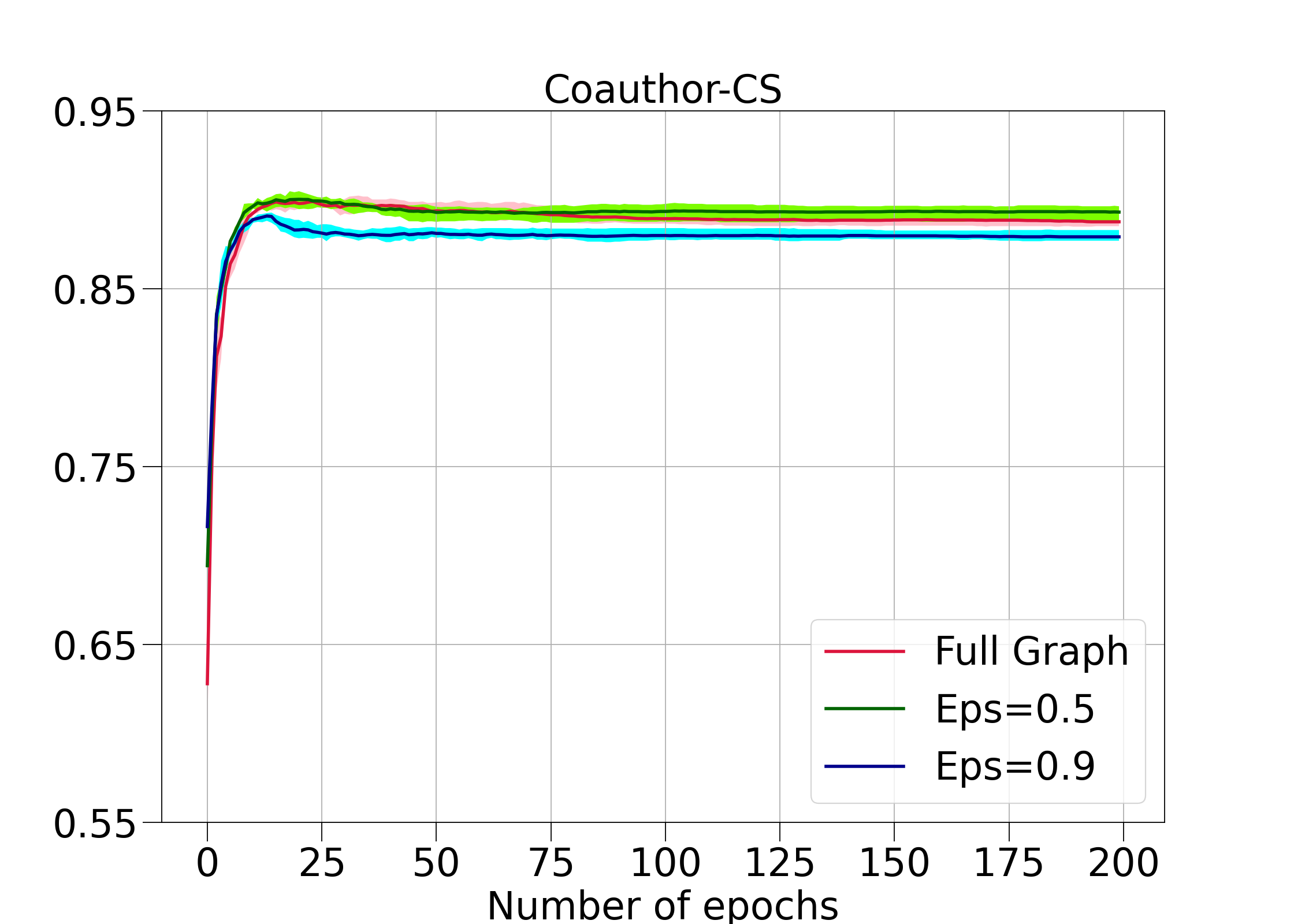}
    \includegraphics[width=0.3\textwidth]{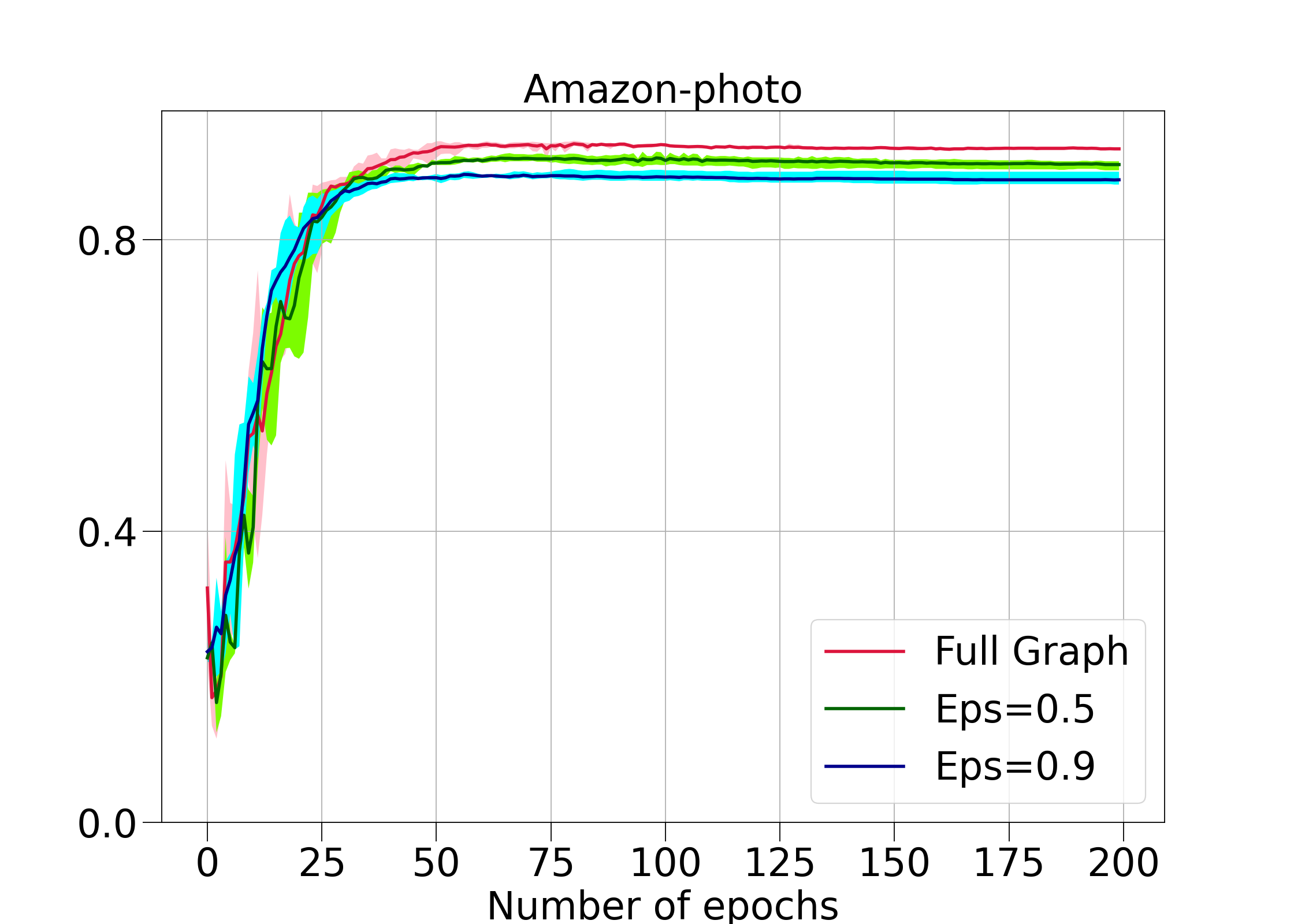}
    \includegraphics[width=0.3\textwidth]{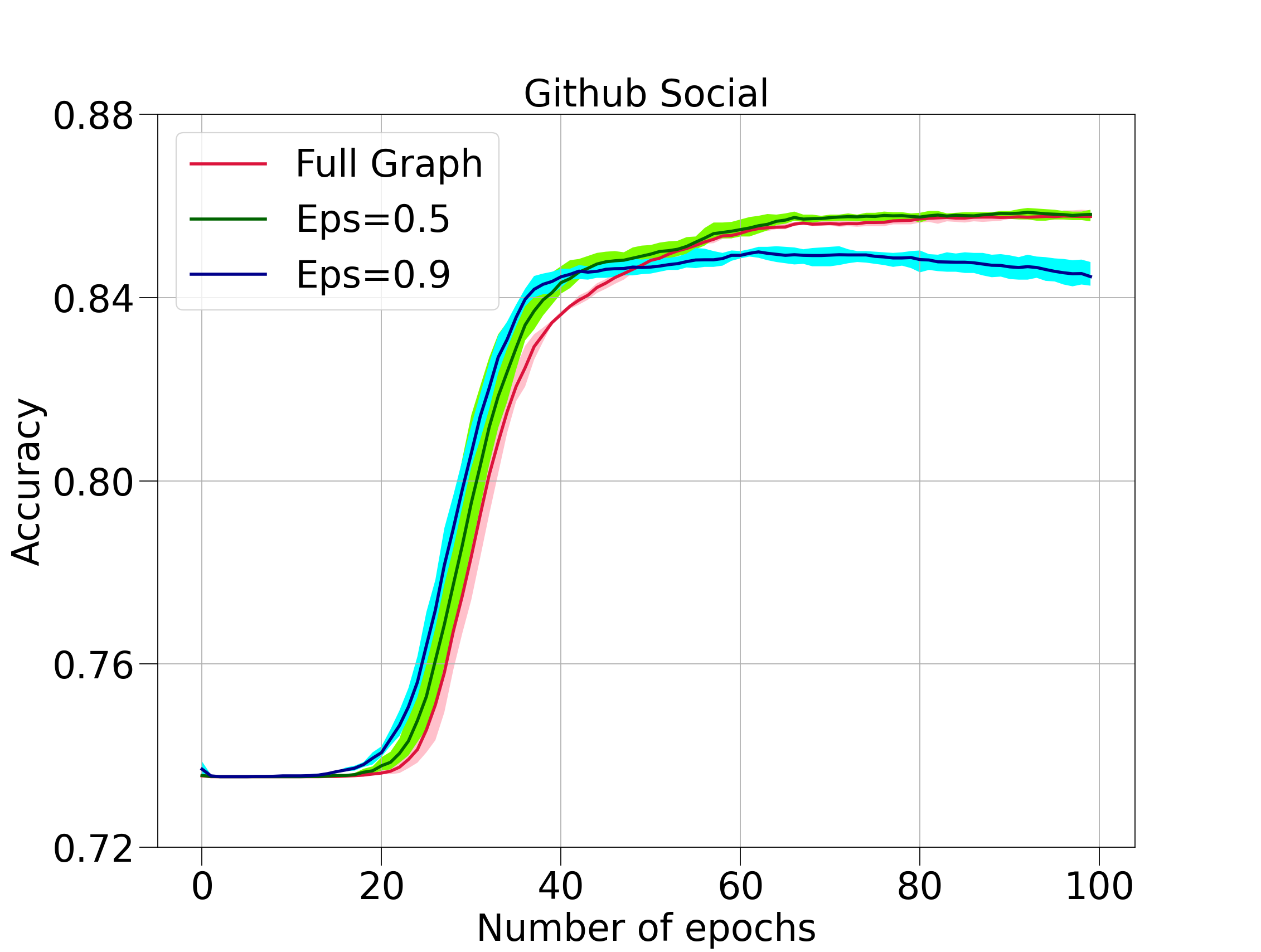}
    \caption{Accuracy Vs number of epochs for full and sparsified graphs, for the Coauthor-CS, Amazon-Photos and Github Social datasets}
    \label{fig:rate_appendix}
\end{figure}

\subsection{Fast algorithm to compute effective resistances}
\label{subsec:compute_reff}

In this section, we briefly describe the algorithm to quickly compute the effective resistances of a graph $G(\mathcal{E},\mathcal{V})$. We use the algorithm presented in \citep{spielman11graph} (Section 4) and describe it here for the sake of completion. 

For any graph $G(\mathcal{E},\mathcal{V})$, let $\bm{Y} \in \R^{M \times M}$ be such that $\bm{Y}(e,e) = w_e$ and $\bm{B} \in \R^{M \times N}$ be such that 
\begin{equation}
    \bm{B}(e,v) = \begin{cases}
    1, \text{if v is e's head} \\
    -1, \text{if v is e's tail.} \\
    0, \text{otherwise}
    \end{cases}
\end{equation} Then, it can be shown that 
\begin{equation}
    R_{uv} = \norm{\bm{Y}^{1/2}\bm{B} \bm{L}^\dagger(\chi_u - \chi_v) }^2
\end{equation} Note that the $R(uv)$'s are just pair-wise distances between the columns of the $M \times N $ matrix $\bm{Y}^{1/2}\bm{B} \bm{L}^\dagger$. The Johnson-Lindenstrauss Lemma can then be applied to approximately compute these distances. If $\bm{R}$ is a $t \times M$ random matrix chosen from a suitable distribution such as the Bernoulli distribution or the Gaussian random distribution, then if $t = O(N/\tau^2)$, then we have 
\begin{equation}
    (1-\tau) \norm{\bm{Y}^{1/2}\bm{B} \bm{L}^\dagger(\chi_u - \chi_v)}^2 \leq \norm{\bm{R}\bm{Y}^{1/2}\bm{B} \bm{L}^\dagger(\chi_u - \chi_v)}^2 \leq (1+\tau) \norm{\bm{Y}^{1/2}\bm{B} \bm{L}^\dagger(\chi_u - \chi_v)}^2.
\end{equation}
Finally, the effective resistances are computed by using a fast Laplacian linear system solver \citep{spielman2011spectral} applied to the rows of the matrix $\bm{R}\bm{Y}^{1/2}\bm{B}$. Each application of the fast solver takes $O(M \log(1/\delta))$ time where $\delta$ denotes the failure probability and can be set to a constant. The fast solver needs to be applied to $O(\log N)$ rows of the matrix $\bm{R}\bm{Y}^{1/2}\bm{B}$. Hence, the overall complexity of the algorithm is $O(M \log N)$. 

\begin{figure}[h!]
    \centering
    \includegraphics[scale=0.35]{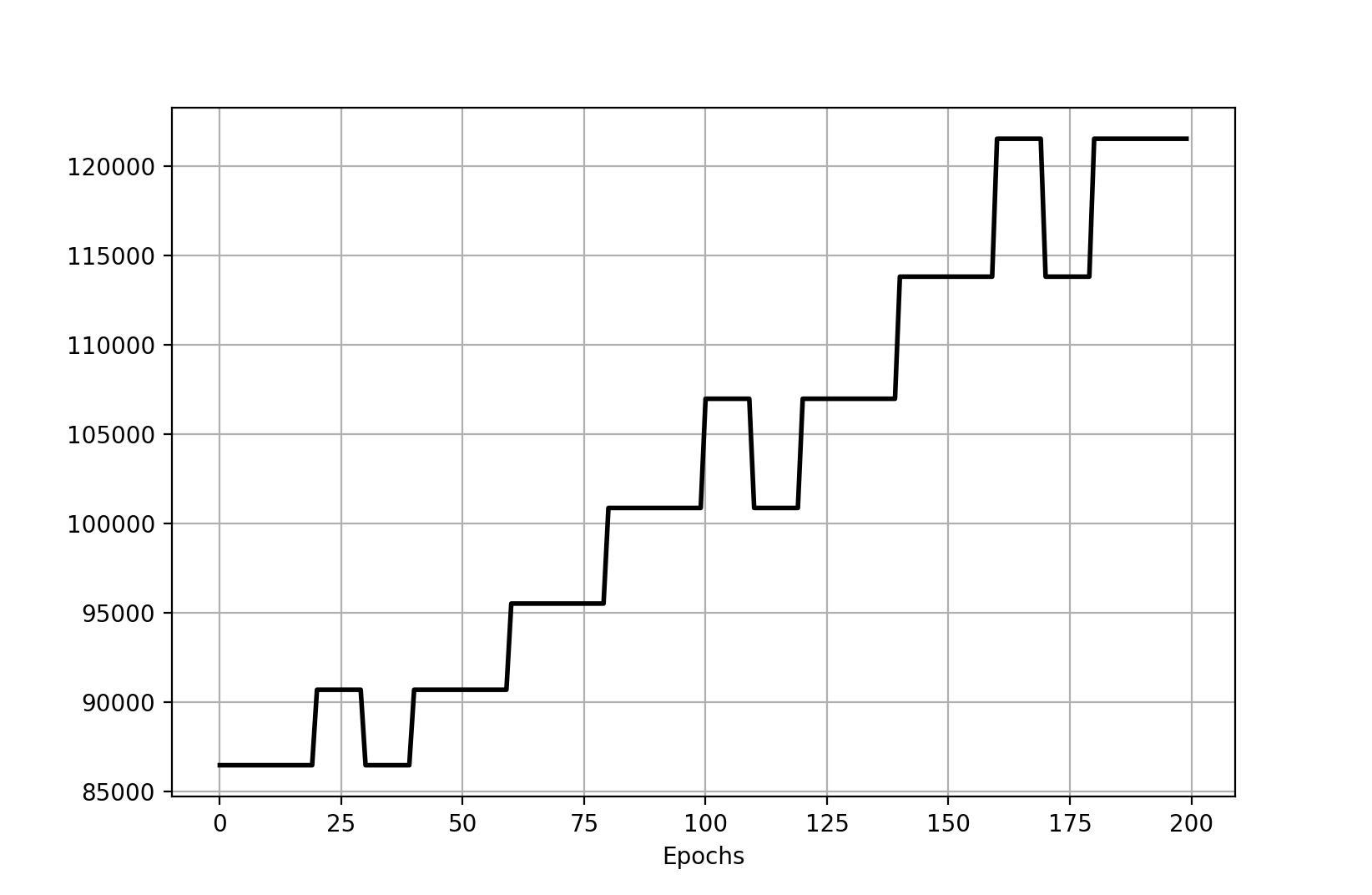}
    \caption{Number of edges selected by the adaptive algorithm for the Coauthor-Phy dataset. The number of edges at a constant epsilon of 0.5 was $163334$.}
    \label{fig:adaptive_algo_edges}
\end{figure}

\begin{figure}
    \centering
    \includegraphics[scale=0.4]{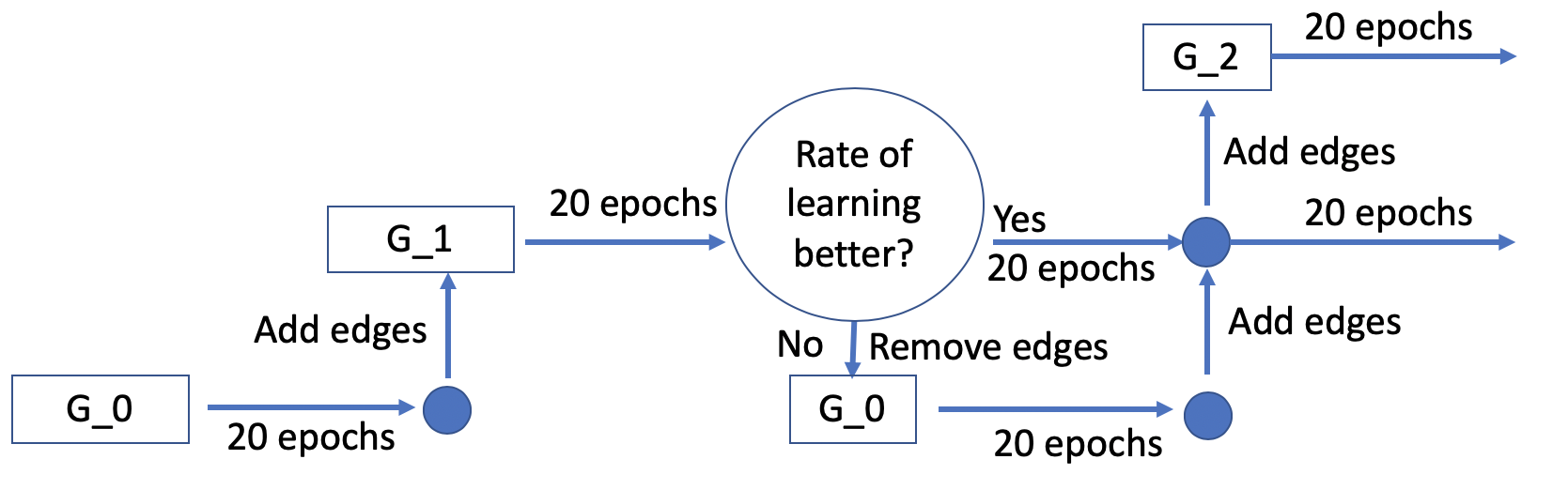}
    \caption{Adaptive algorithm to tune epsilon parameter (or the number of edges). We start with a sparse graph and iteratively build denser graphs as we progress through the epochs. In the ``Add edges" step, we add a fixed number ($0.003M$) of edges to the graph. In the ``Rate of learning better?" step, we compare the slopes of the training accuracy curve with the previous slope over 20 epochs. } 
    \label{fig:adaptive_algo}
\end{figure}

\begin{figure}
    \centering
    \includegraphics[scale=0.25]{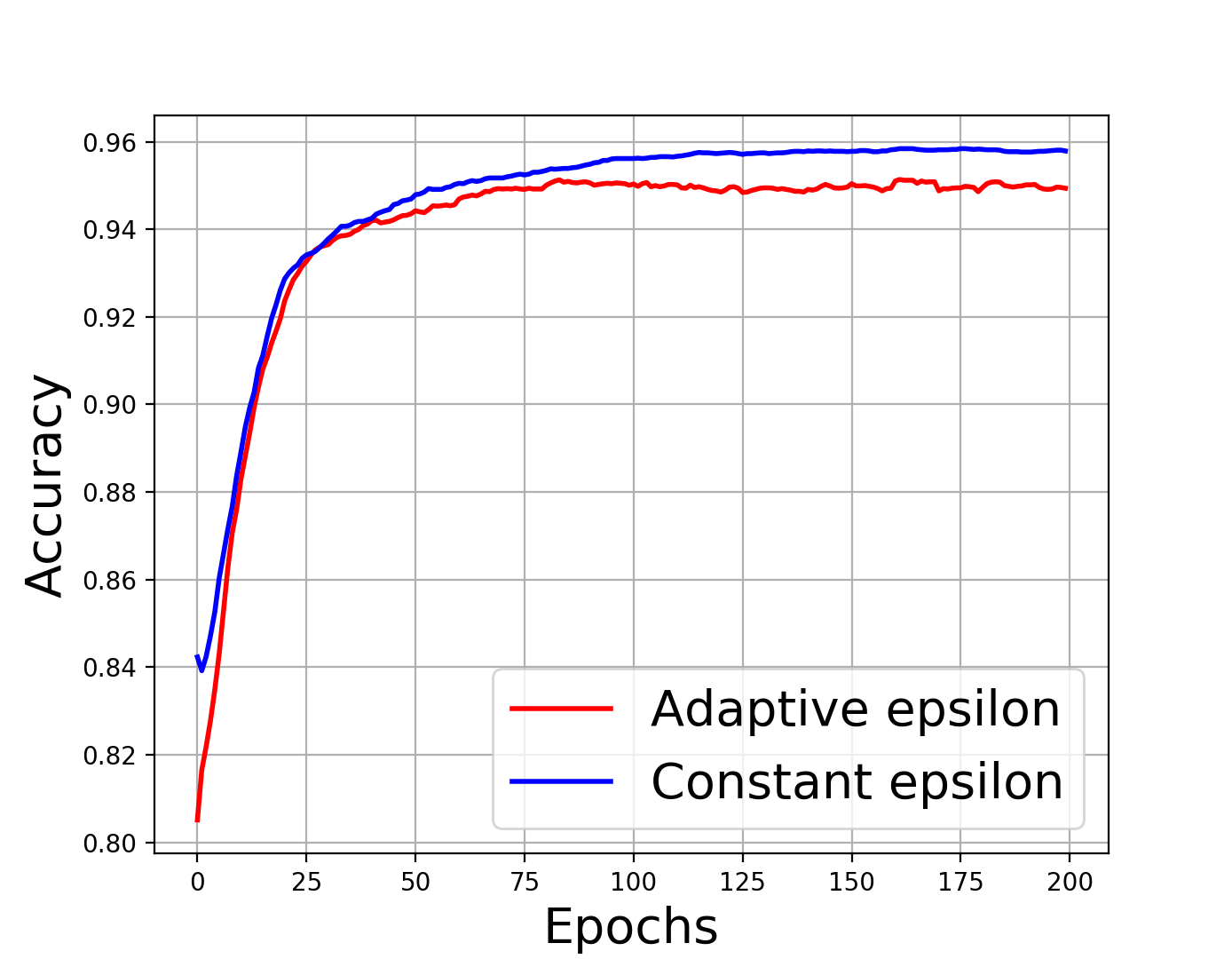}
    \includegraphics[scale=0.25]{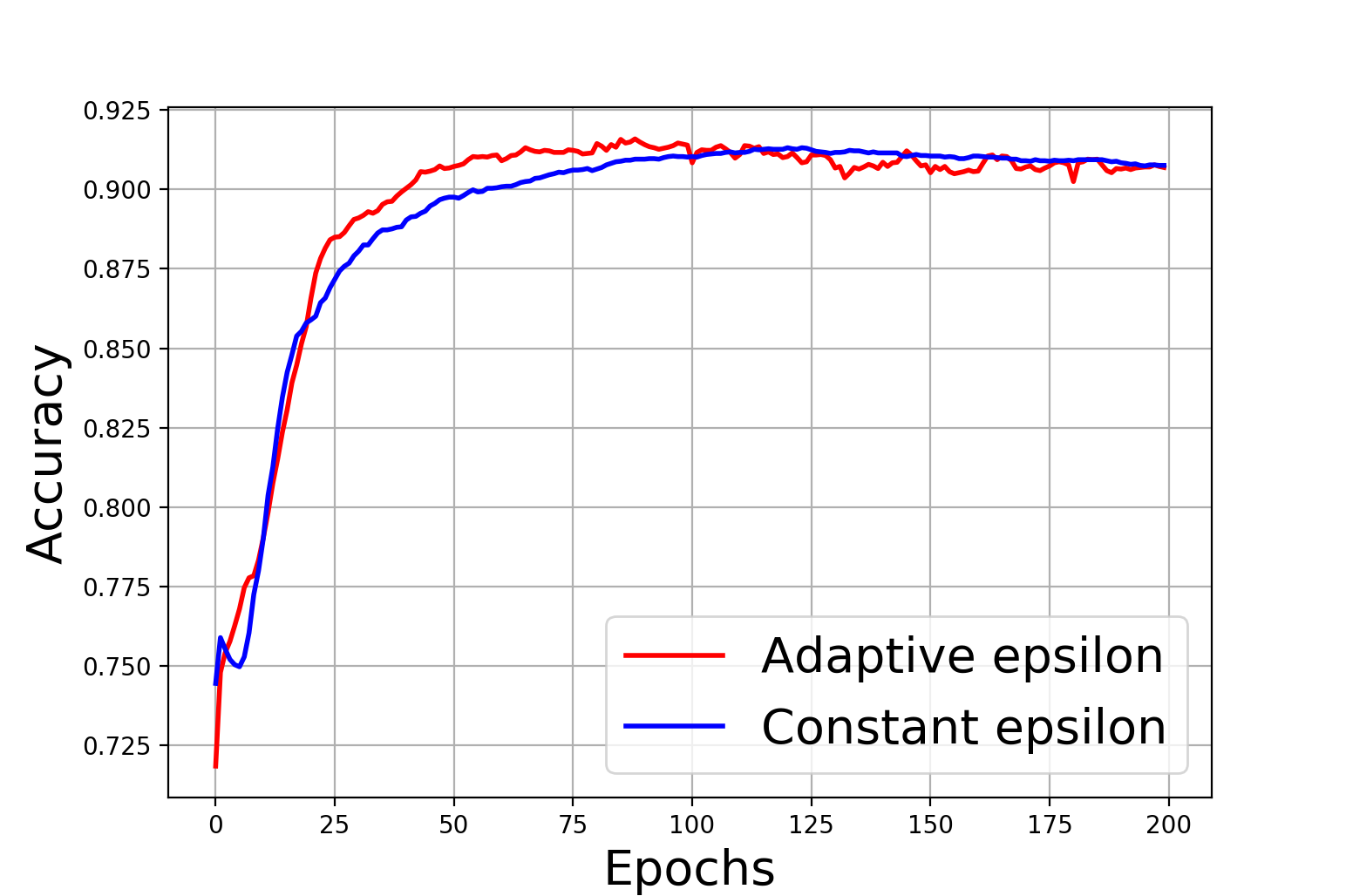}
    \includegraphics[scale=0.25]{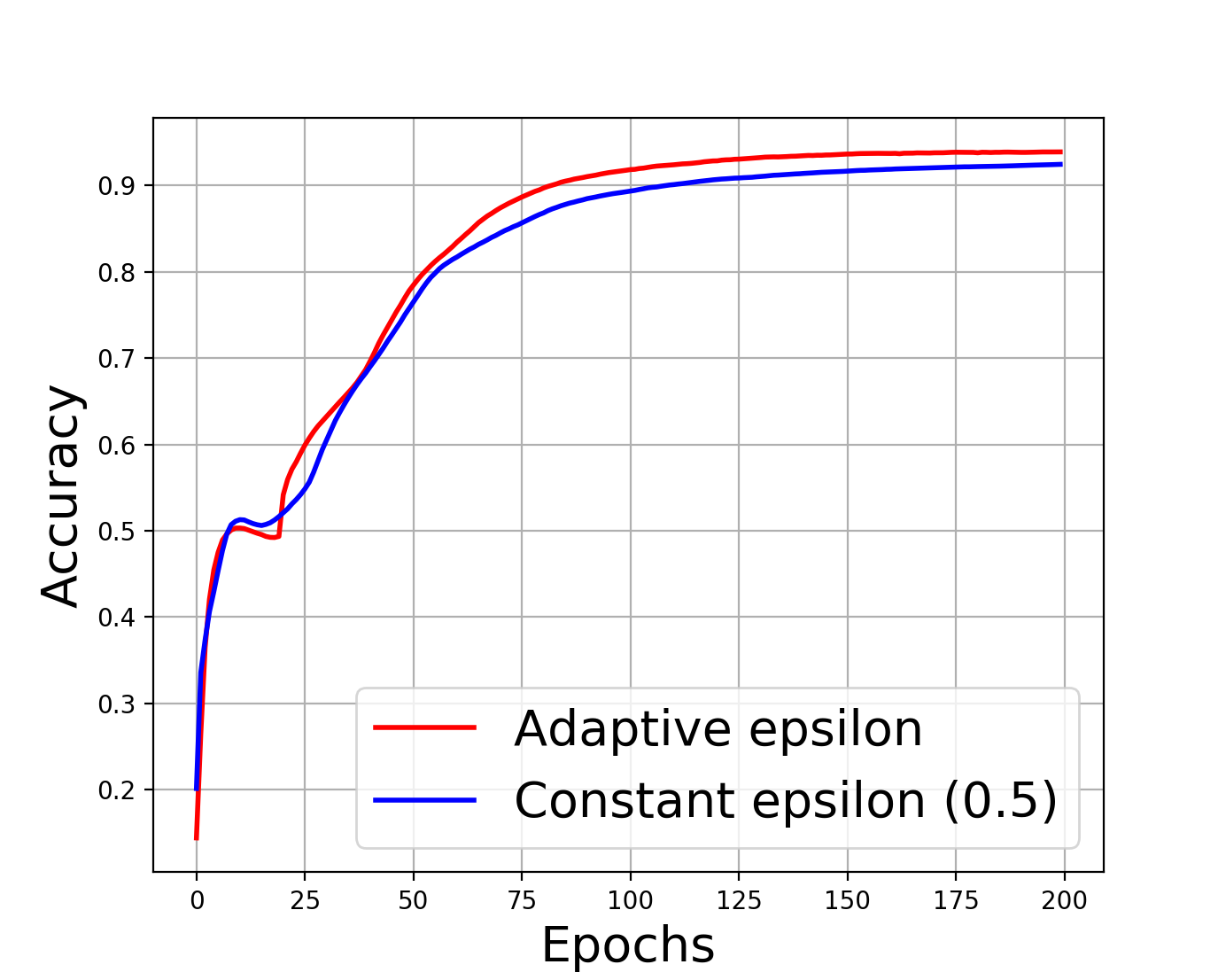}
    \caption{Simulation results for the adaptive algorithm on the coauthor-Physics, coauthor-CS and Reddit datasets.}
    \label{fig:adaptive_algo_acc}
\end{figure}

\subsection{Adaptive sparsification algorithm}
\label{subsec: epsilon_effect}

In the previous sections, we showed that for a suitable value of the tolerance parameter $\epsilon$ (such as $0.5$, $0.9$), the accuracy is equivalent to that of using the full graph. However, the level of sparsification needed to maintain the classification performance might be different for different datasets. This raises a very natural question of how to design the $\epsilon$ parameter for different datasets. In this subsection, we seek to address this question.

We provide here an algorithm that sweeps through various values of $\epsilon$ and achieves state of the are results on any given dataset. In our experience, we find that using $\epsilon=0.5$ produces test accuracies that are as good as that of using the full graph. Hence, we set $0.5$ as the minimum value of $\epsilon$ that our algorithm chooses. It iteratively chooses a denser or a sparser graph based on the current validation error of the algorithm. We provide a block diagram of the algorithm in Figure \ref{fig:adaptive_algo}.  In Figure. \ref{fig:adaptive_algo_acc}, we show the training accuracy Vs the epochs for our algorithm and compare it with that of a model using a constant $\epsilon$ of $0.5$. From the figure, it is evident that our adaptive algorithm is successful in achieving the same learning rate as that of a model with constant $\epsilon$. Hence this algorithm is suitable to be deployed as is on other real world datasets. 

In Figure. \ref{fig:adaptive_algo_edges}, we show the the number of edges resulting edges in the graph after each instance of the algorithm choosing to sparsify or make the graph more dense. Since denser graphs do offer more information, it is natural that the algorithm chooses denser graphs over time in general. But it is also interesting to see that there are instances where the algorithm chooses a sparser graph. We show the accompanying time per epoch as well in Figure.  where we can see that it is much smaller than that of using a constant, low $\epsilon$ parameter.

% \subsection{Proofs of the theoretical results}

% In this section, we provide thoeretical analysis of graph attention networks and the utility of spectral graph sparsification. For all our analysis, we assume that the attention vector $\bm{a}$ defined as in \cite{velickovic2018graph} is symmetric: 
% \begin{equation}
%     \bm{a}[1:N] = \bm{a}[N+1:2N].
% \end{equation}This renders the resulting attention coefficient matrix symmetric and hence the analysis is simpler.

\end{document}